\useunder{\uline}{\ul}{}
\pgfplotsset{compat=1.14, every non boxed x axis/.append style={x axis line style=-},
     every non boxed y axis/.append style={y axis line style=-}}
\title{Pruning the Index Contents for Memory Efficient Open-Domain QA}
\newtheorem{theorem}{Theorem}[section]
\newtheorem{lemma}[theorem]{Lemma}
\title{R2-D2: A Modular Baseline for Open-Domain Question Answering}
\author{Martin Fajcik, Martin Docekal, Karel Ondrej, Pavel Smrz\\
  Brno University of Technology\\
  {\tt \{ifajcik,idocekal,ondrej,smrz\}@fit.vutbr.cz} }
\begin{document}
\maketitle

\begin{abstract}
    This work presents a novel four-stage open-domain QA pipeline R2-D2 (\textsc{Rank twice}, \textsc{reaD twice}). The pipeline is composed of a retriever, passage reranker, extractive reader, generative reader and a mechanism that aggregates the final prediction from all system's components. We demonstrate its strength across three open-domain QA datasets: NaturalQuestions, TriviaQA and EfficientQA, surpassing state-of-the-art on the first two. Our analysis demonstrates that: (i) combining extractive and generative reader yields absolute improvements up to 5 exact match and it is at least twice as effective as the posterior averaging ensemble of the same models with different parameters, (ii) the extractive reader with fewer parameters can match the performance of the generative reader on extractive QA datasets\footnote{Our demo is available at \url{http://r2d2.fit.vutbr.cz/}. Code and preprocessed data are available at \url{https://github.com/KNOT-FIT-BUT/R2-D2}.}.
\end{abstract}


    
\section{Introduction}

Last year showed rapid progress in neural factoid open-domain question answering based on \emph{retriever-reader} architecture (Open-QA).
Such Open-QA systems \cite{chen2017reading} seek evidence for answering the questions inside the knowledge source using the \emph{retriever} and then extract the answer from the retrieved knowledge using the \emph{reader}. The knowledge source is often a large corpus of short snippets of natural language, so-called passages (e.g., taken from an encyclopedia).

The progress can be attributed to advances in neural retrieval methods \cite[\textit{inter alia}]{karpukhin2020dense, izacard2020distilling, khattab2020relevance, luan2020sparse, xiong2020approximate} that benefit from smarter negative sampling strategies or a better trade-off between complex question-passage interaction and its efficiency. It also can be attributed to reading methods that enable processing large quantities of retrieved passages \citet{izacard2020leveraging}. They compensate for a certain amount of the retrieval error and enable early aggregation of answer's evidence between passages.

This work demonstrates the relative improvement of 23-32\% compared to last year's state-of-the-art DPR system  \cite{karpukhin2020dense}, while using the same knowledge source and the retriever. We propose a state-of-the-art Open-QA baseline composed of retriever, passage reranker, extractive reader, generative reader, and a novel component fusion approach. We follow the practice from information retrieval and show that our moderately sized reranker allows to reduce the passage count needed at the input of large reader models about four times. Our readers then take the best from both worlds. The extractive reader proposes a list of salient answer spans. The generative reader reranks these spans, seeing all the passages at once, or generates its own answer.
The proposed pipeline is heterogeneous and modular, making it an ideal benchmark. 

To sum up, our contributions are three-fold:
\begin{enumerate}
    \item We present a simple novel approach to aggregate scores from all system components and show that combining extractive and generative approaches is superior to a posterior averaging ensemble of homogeneous models.
    \item We show that the extractive reader can sometimes match the performance of the generative approaches without taking the advantage of the fusion between retrieved passages. This indicates that the evidence aggregation from multiple passages in the generative approaches is either not learned or not necessary to perform well on these datasets.
    \item We push the state-of-the-art for two large and popular datasets, demonstrating what is achievable with the proposed approach, having the same knowledge source and the retriever as in the previous works \cite{karpukhin2020dense,izacard2020leveraging}.
\end{enumerate}

\section{Open-QA Pipeline}
We propose the R2-D2 (\textsc{Rank twice}, \textsc{reaD twice}), 4-stage pipelined system that can choose whether to generate or to extract an answer.  The parameters of each component in pipeline are estimated separately. It is composed of DPR passage retriever \cite{karpukhin2020dense}, passage reranker (see subsection \ref{ss:reranker}), and two readers. Figure \ref{fig:r2d2_pipeline} shows the diagram of our system. The first reader performs an extractive span-selection similar to \citet{fajcik2020rethinking}. The second reader is based on Fusion-In-Decoder (FiD) \cite{izacard2020leveraging}.

Formally, given a question $q \in \mathcal{Q}$ from the set of all possible questions $\mathcal{Q}$  and the corpus $\mathcal{C}=\{p_1, p_2, ... , p_n\}$ composed of passages $p_i$, the retriever learns a ranking function $\operatorname{rank}:\mathcal{Q} \times \mathcal{C} \rightarrow \mathbb{R} $ that assigns a score to each passage. We assume each passage contains its passage title (e.g., title from the Wikipedia article).

Taking a top-$K$ scoring passages $\mathcal{C}_{r}\subset\mathcal{C}$, reranker again rescores $\mathcal{C}_{r}$  scoring passages by learning a reranking function $\operatorname{rerank}:\mathcal{Q} \times \mathcal{C}_{r} \rightarrow \mathbb{R}$. Note that while $\operatorname{rank}$ and $\operatorname{rerank}$ have similar signatures, the computational cost of $\operatorname{rerank}$ over the same amount of passages is drastically higher, as it computes fine-grained interaction between tokens of question and passage.

Next, the rescored passages are passed to two readers: the extractive reader reads top-$V$ passages $\mathcal{C}_{rr}\subset\mathcal{C}_{r}$ independently of each other and assigns the probability $\boldsymbol{P}_{e}(a_e|q, \mathcal{C}_{rr})$ to each span $a_e$ in the passages (see subsection \ref{ss:ext_reader}). 
The FiD generative reader reads top-$V_2$ passages $\mathcal{C}_{rr}'\subset\mathcal{C}_{r}$  jointly and generates an answer from probability space $\boldsymbol{P}_g(a_g|q,\mathcal{C}_{rr}')$ via greedy search.

Finally, R2-D2 aggregates the outputs from all components using two fusions (described in subsection \ref{ss:fusions}).

\begin{figure}[t!]
    \centering
    \def\col{2.00cm}
\def\row{3.30cm}
\def\blockwidth{3.1cm}
\def\blockwidthlarge{7.1cm}
\def\blockheight{0.90cm}
\def\arrowrate{0.3}

\def\outputcolor{gray!20}
\def\examplecolor{white}

\newcommand\Textbox[2]{%
    \parbox[c][\dimexpr#1-7.7pt][c]{2.3cm}{\centering{\bf #2}}}

\begin{tikzpicture}[->,line width=2pt,>=latex,node distance=\lvspace,
                    font=\footnotesize,
                    thick,
                    module/.style={
                        line width=1pt,
                        rectangle split, 
                        rectangle split parts=3,
                        rectangle split part fill={blue!30, gray!20, gray!10},
                        draw,
                        text width=\blockwidth, 
                        align=center, 
                        rounded corners=0.1cm,
                    }
                ]
    \node[align=center,text width=0.5\textwidth] at (1*\col,3.7*\row) (question) {\large In which Czech city is the brewery of its largest beer exporter?};

    \node[module,rectangle split part fill={blue!30, \outputcolor, \examplecolor}, text width=\blockwidthlarge]   at (1*\col,3*\row) (R)  {\Textbox{\blockheight}{Retriever}\nodepart{two}{top-K passages}\nodepart[align=left]{three}{1. ... town of České Budějovice, known as Budweis... \\ 2. Czech Beer Festival is the biggest ... \\ 3. Plzeň, also called Pilsen is a city...}};
    \node[cylinder, draw, ,thick,aspect=0.2,
        minimum height=0.8cm,minimum width=1.0cm,
        shape border rotate=90, align=center] at (-0.5*\col,3.2735*\row) (datastorage) {index};

    \node[module,rectangle split part fill={cyan!40, \outputcolor, \examplecolor}, text width=\blockwidthlarge]   at (1*\col,2*\row) (RR) {\Textbox{\blockheight}{Passage reranker}           \nodepart{two}{top-K reranked passages}\nodepart[align=left]{three}{1. Plzeň, also called Pilsen is a city... \\ 2. ... town of České Budějovice, known as Budweis... \\ 3. Czech Beer Festival is the biggest ...}};
    \node[module,rectangle split part fill={green!30, \outputcolor, \examplecolor}]  at (0*\col,1*\row) (ER) {\Textbox{\blockheight}{Extractive reader}          \nodepart{two}{top-M answer spans}\nodepart[align=left]{three}{1. České Budějovice  \\ 2. Festival \\ 3. Plzeň}};
    \node[module,rectangle split part fill={yellow!40, \outputcolor, \examplecolor}] at (2*\col,1*\row+2.1ex) (AR) {\Textbox{\blockheight}{Abstractive reader}         \nodepart{two}{top generated answer}\nodepart[align=left]{three}{1. Brno}};
    \node[module,rectangle split part fill={yellow!40, \outputcolor, \examplecolor}]    at (0*\col,0*\row) (ARR)  {\Textbox{\blockheight}{Abstractive reader}\nodepart{two}{top-M reranked spans}\nodepart[align=left]{three}{1. Plzeň \\ 2. Festival \\ 3. České Budějovice}};
    \node[module,rectangle split part fill={red!30, \outputcolor, \examplecolor},]    at (0*\col,-1*\row) (AGGR)  {\Textbox{\blockheight}{Score aggregation}\nodepart{two}{top-M aggr. spans}\nodepart[align=left]{three}{1. Plzeň \\ 2. České Budějovice \\ 3. Festival}};
    \node[module,rectangle split part fill={red!40, \outputcolor, \examplecolor}]    at (1*\col,-2*\row+2.1ex) (BD)  {\Textbox{\blockheight}{Binary decision}\nodepart{two}{top answer}\nodepart[align=left]{three}{1. Plzeň}};
    
    \draw[->,rounded corners] (question.south) to (R.north-|question.south);
    \draw[->,rounded corners] (R) to (RR);
    \draw[->,rounded corners] (RR) to ($(RR.south)!\arrowrate!(ER.north-|RR.south)$) to ($(ER.north|-RR.south)!\arrowrate!(ER.north)$) to (ER.north);
    \draw[->,rounded corners] (RR) to ($(RR.south)!\arrowrate!(AR.north-|RR.south)$) to ($(AR.north|-RR.south)!\arrowrate!(AR.north)$) to (AR.north);
    \draw[->,rounded corners] (ER) to (ARR);
    \draw[->,rounded corners] (AR) to ($(AGGR.south-|AR.south)!\arrowrate!(BD.north-|AR.south)$) to ($(BD.north|-AGGR.south)!\arrowrate!(BD.north)+(0.5cm,0)$) to ($(BD.north)+(0.5cm,0)$);
    \draw[->,rounded corners] (RR) to ($(RR.south|-ARR.east)+(0,0.9cm)$) to ($(ARR.east)+(0,0.9cm)$);
    \draw[->,rounded corners] (RR) to ($(RR.south|-AGGR.east)+(0,0.9cm)$) to ($(AGGR.east)+(0,0.9cm)$);
    \draw[->,rounded corners] (ARR) to (AGGR);

    \draw[->,rounded corners] (AGGR) to ($(AGGR.south)!\arrowrate!(BD.north-|AGGR.south)$) to ($(BD.north|-AGGR.south)!\arrowrate!(BD.north)-(0.5cm,0)$) to ($(BD.north)-(0.5cm,0)$);
    
    \draw[->,rounded corners] (R) to ($(R.south)!\arrowrate!(RR.north-|R.south)$) to ($(R.south)!\arrowrate!(RR.north-|R.south)-(4.22cm, 0)$) to ($(AGGR)-(2.22cm,-0.7cm)$) to ($(AGGR.west)+(0,0.7cm)$);
    
    \draw[->,rounded corners] (ER) to ($(ER.south)!\arrowrate!(ARR.north-|ER.south)$) to ($(ER.south)!\arrowrate!(ARR.north-|ER.south)-(2.00cm, 0)$) to ($(AGGR)-(2.00cm,-1.1cm)$) to ($(AGGR.west)+(0, 1.1cm)$);
\end{tikzpicture}
    \caption{R2-D2 pipeline.}
    \label{fig:r2d2_pipeline}
\end{figure}

\subsection{Passage Reranker}\label{ss:reranker}
The proposed passage reranker is based on transformer cross-encoder similar to \citet{nogueira2019passage, luan2020sparse}.
The input is the concatenation of question $q\in\mathcal{Q}$ and passage $p\in\mathcal{C}_r$ with a special \texttt{SEP} token between them. The passage consists of a title and context that are prepended with special start tokens and concatenated together.
We denote the contextual representation of input token $w$ obtained by the cross-encoder as $\operatorname{En}(p, q)[w]\in\mathbb{R}^d$.

    Now we can define the reranking function for passage rescoring as
\begin{equation}
    \operatorname{rerank}(q,p) = \operatorname{En}(p,q)[\texttt{CLS}]^\top w
\end{equation}
where $w\in\mathbb{R}^{d}$ is a trainable vector and \texttt{CLS} is the special token added at the start of an input sequence.
Finally, we define the following formula\footnote{Formal definition of softmax over a set is described in the Apendix \ref{app:softmax_not}.}
\begin{equation}
    \boldsymbol{P}_{rr}\left(p | q, \mathcal{C}_r\right) =\operatorname*{softmax}\limits_{p\in \mathcal{C}_r}\left(\operatorname{rerank}\left(q, p\right)\right)_{p}
\end{equation}
to assign a probability to the case that passage $p$ contains answer to the question $q$.

\begin{description}[style=unboxed,leftmargin=0em,listparindent=\parindent]
    \setlength\parskip{0em}
\item[Training.] The model input for each question is exactly one positive sample supplemented with hard negatives from the retriever. The ground truth passage, annotated the same way as in \citet{karpukhin2020dense}, is primarily used as a positive sample. If the ground truth is unknown, the positive sample is the best retriever passage containing the answer.
The hard negatives are uniformly sampled from retriever's top-$K$ results that do not contain the answer. 
The used loss function is the cross-entropy.
\end{description}

\subsection{Extractive Reader}
\label{ss:ext_reader}

Extractive reader estimates the probability $\boldsymbol{P}_{e}(a_e|q, \mathcal{C}_{rr})$. 
It is the probability of a span $a_e$ from top-$V$ passage $p \in \mathcal{C}_{rr}$ being an answer to a question $q$. 
We decompose the $\boldsymbol{P}_{e}(a_e|q, \mathcal{C}_{rr})$ into four probabilities of:

\begin{itemize}
    \setlength\parskip{0em}
    \item token $s$ being starting token of an answer span,
    \item token $e$ being ending token of an answer span,
    \item tokens $s$ and $e$ being boundary tokens of an answer span \cite{fajcik2020rethinking},
    \item passage $p$ containing an answer for the question $q$ (inner reranker) as in \citet{karpukhin2020dense}.
\end{itemize}


To obtain the final probability used in test-time, we compute their product\footnote{We tried decoding from the subsets of these probabilities in Appendix \ref{app:decoding_ext_probs} not observing significant difference.}. These probabilities are defined as:
\begin{equation}
    \boldsymbol{P}_{*}(*|q, \mathcal{C}_{rr}) = \operatorname{softmax}(s_*)_i \: ,
\end{equation}
where $*$ may stand for a \emph{start}, \emph{end}, \emph{joint}, and a \emph{passage}. The $i$ is an index of a given element, and the $s_*$ is a vector of scores for each element among all passages in $\mathcal{C}_{rr}$. So the \emph{softmax} normalization sum goes through all the passages. On the other hand, the $s_*$ scores are estimated by the model with just a single passage at its input \cite{clark-gardner-2018-simple}. The scores are as follows:
\setlength{\jot}{1ex}
\begin{gather}%
s^{i}_{start} = \operatorname{En}(p,q)[s]^\top w_{start} \\
s^{i}_{end} = \operatorname{En}(p,q)[e]^\top w_{end} \\
s^{i}_{joint} = (W_j \operatorname{En}(p,q)[s] + b_j)^\top \operatorname{En}(p,q)[e] \\
s^{i}_{passage} = \operatorname{En}(p,q)[\texttt{CLS}]^\top w_{p} \:.
\end{gather}%
Where $w_*,b_j \in \mathbb{R}^h$, $\operatorname{En}(p, q)[\cdot] \in \mathbb{R}^h$, and $W_j \in \mathbb{R}^{h \times h}$ are all trainable parameters.

We omit the spans of a title and question for answer span selection. Therefore the final answer can be selected only from the context.

The following training objective with independently marginalized components is used:
\begin{equation} \label{eq:extReaderIndLoss}
\begin{split}
	-\log \sum_{s \in starts(C_{rr})} \boldsymbol{P}_{start}(s|q, \mathcal{C}_{rr}) \\
	-\log \sum_{e \in ends(C_{rr})} \boldsymbol{P}_{end}(e|q, \mathcal{C}_{rr}) \\
	-\log \sum_{j \in boundaries(C_{rr})} \boldsymbol{P}_{joint}(j|q, \mathcal{C}_{rr}) \\
	-\log \sum_{p \in C_{rr}} \boldsymbol{P}_{passage}(p|q, \mathcal{C}_{rr}) \: .
\end{split}
\end{equation}

The sums are going through target annotations (starts, ends, etc.) obtained by the distant supervision approach.
\subsection{Component Fusion}
\label{ss:fusions}
To produce the final answer, R2-D2 aggregates the log-probabilities of all system components via linear combinations tuned on validation data.  

Firstly, the log-probabilities of all system components for top-$M$ answer spans proposed by the extractive reader are aggregated. Formally, assume the $\mathcal{A}_q$ is the set of top-$M$ answer spans from $\boldsymbol{P}_{e}(a|q,\mathcal{C}_{rr})$ for question $q$.
The generative model performs  the \textbf{answer reranking} evaluating the log-probability of the answer spans 
\begin{equation}
    \label{eq:genrerank}
    \{\log\boldsymbol{P}_g(a|q,\mathcal{C}_{rr}'): a\in \mathcal{A}_q\}.
\end{equation}

Next a logistic regression loss \eqref{eq:aggloss} is minimized to perform \textbf{score aggregation}. 
It combines the scores across the R2-D2 components to maximize the correct answer span probability over dataset~$\mathcal{D}$. This dataset is composed of the top-$M$ outputs of the extractive reader with the correct answer.
\begin{gather}%
x(a) =[\boldsymbol{P}_{e}(a) \; \boldsymbol{P}_g(a) \; \boldsymbol{P}_r(p_a) \; \boldsymbol{P}_{rr}(p_a)] \\
\label{eq:aggloss}
   -\sum_{\mathclap{(\mathcal{A}_q,gt) \in \mathcal{D}}} \log\operatorname*{softmax}\limits_{a \in \mathcal{A}_q} \big({ w^\top \log x(a) + b}\big)_{gt}
\end{gather}%
Here $p_a$ denotes the passage containing the answer span $a$, $\mathcal{A}_q$ is a set of proposed answer spans, $gt$ is the correct answer span, distribution dependencies are dropped for clarity and only the logistic regression parameters $w, b$ are tuned in this step.

Finally, we theorized the correct answer span might not always be available in the passage set $\mathcal{C}_{rr}$, but the generative reader might be able to generate the answer from its parameters and the evidence given in passages. We introduce the binary classifier, which decides whether to select the best span answer from answer aggregation step or a free-form answer generated via FiD. Given that $s_{agg}(q)=\max_{a\in\mathcal{A}_q} w^\top x(a)+b$ is the best span score and $s^*_g(q)=\log\boldsymbol{P}_g(a_q^*|q,\mathcal{C}_{rr}')$ is the log-probability of the answer $a_q^*$  obtained via greedy decoding for question $q$, a classifier is trained via binary cross-entropy $BCE(l,t)$ with log-odds ratio $l$ and target $t$ to do the \textbf{binary decision}
\begin{equation}
\label{eq:bdformula}
       \sum_{(e,t) \in \mathcal{D}} BCE(w^\top [s_{agg}(e);s^*_g(e)]+b, t ).
\end{equation}
Here, the training dataset $\mathcal{D}$ contains only cases where either the extractive or the abstractive prediction is correct (but not both).
\section{Experimental Setup}
Our models are implemented in PyTorch \cite{paszke2019pytorch} using Transformers \cite{wolf-etal-2020-transformers}. We use 12GB GPU to train the passage reranker, 48GB GPU for the generative reader, and 16x 32GB GPUs to train the extractive reader with $V=128$ passages at its input. The inference runs on 12GB GPU. In all experiments, we used Adam optimizer with a decoupled weight decay \cite{loshchilov2017decoupled}. Our models are evaluated by two metrics:
\begin{description}[style=unboxed,leftmargin=0em,listparindent=\parindent]    \setlength\parskip{0em}
    \item [Exact match (EM)] measures the proportion of examples, for which the system prediction matched at least one annotated ground-truth answer. We use the script from \citet{lee-etal-2019-latent}\footnote{\url{https://cutt.ly/rkZNIer}}.
    \item [Accuracy@K] measures the proportion of examples, for which the ground-truth answer string is present in top-K retrieved passages. We match the string exactly as \citet{karpukhin2020dense}\footnote{\url{https://cutt.ly/0luNhx4}}.
\end{description}

\subsection{Datasets and Data Pre-processing}

We evaluate our models on three datasets. Their statistics are available in Table \ref{fig:datatsets}. To train the reranker we filter out examples, which do not contain golden passage or exact match in top-$K$ retrieved passages. 
To train the extractive reader, only examples with exact match in a golden passage or top-1 retrieved passage are kept. Both filtering strategies are closely described in Appendix~\ref{app:data_preprocessing}.

\begin{description}[style=unboxed,leftmargin=0em,listparindent=\parindent]
\setlength\parskip{0em}
\item[NQ-Open] \cite{kwiatkowski2019natural, lee-etal-2019-latent} or NaturalQuestions-Open consists of real user queries obtained from Google search engine. The maximum length of each answer is at most 5 tokens. Each training and development sample contains 1 annotated answer, while test data contain 5-way answer annotation.

\item[TQ-Open] \cite{joshi-etal-2017-triviaqa} or TriviaQA-Open consists of question-answer pairs from 14 different trivia quiz websites. Each question contains human annotated answer and a set of answer aliases gathered from Wikipedia. We use the unfiltered version.

\item[EfficientQA] \cite{min2021neurips} is a dataset collected the same way as NQ-Open through 2019 and thus may contain more questions without evidence in our corpus than NQ-Open. We use the officially released dev set for testing\footnote{The test set was not released during our experiments.} models trained on NQ-Open training data.
\end{description}
Additionally, we also report results according to train-test set overlaps discovered by \citet{lewis-etal-2021-question} in Appendix \ref{app:train_test_overlap}.
\begin{table}
    \centering
    \scalebox{0.92}{\def\tablepad{1ex}
\begin{tabular}{l r r r}
\toprule
    \multicolumn{1}{c}{\bf Dataset} & \multicolumn{1}{c}{\bf Train} & \multicolumn{1}{c}{\bf Dev} &     \multicolumn{1}{c}{\bf Test}  \\
    \midrule
    NQ-Open    & 79,168 & 8,757 & 3,610 \\
    \hspace{\tablepad} 
     - filt. reranker    & 71,238 & - & - \\
    \hspace{\tablepad}  
     - filt. ext. reader   & 61,755 & - & - \\
    \hspace{\tablepad} 
     - w/ golden passage  & 58,876 & 6,515  & - \\
    TQ-Open    & 78,785 & 8,837 & 11,313 \\
    \hspace{\tablepad} 
     - filt. reranker    & 69,346 & - & - \\
    \hspace{\tablepad} 
     - filt. ext. reader   & 62,332 & - & - \\
    \hspace{\tablepad} 
     - w/ golden passage    & 60,413 & 6,760 & - \\
    EfficientQA & -      & -       & 1,800 \\
\bottomrule
\end{tabular}}
    \caption{Dataset statistics. The filt. lines report how many examples are kept for training the reranker (filt. reranker) and extractive reader (filt. ext. reader). The lines w/ golden passage denote how many examples from the set contain golden passage annotation. }
    \label{fig:datatsets}
\end{table}
\subsection{Models and Pipeline}
\label{ss:models_and_pipeline}
\begin{description}[style=unboxed,leftmargin=0em,listparindent=\parindent]
\setlength\parskip{0em}


\item[Retriever.]
We use BERT-based DPR from the official checkpoint\footnote{\url{https://github.com/facebookresearch/DPR}}. Each passage is represented via 768-dimensional embedding. We use a multiset checkpoint for TQ-Open, as the checkpoint for TQ directly isn't officially released.
We use the same knowledge corpus containing 21,015,320 passages based on 12-20-2018 Wikipedia snapshot as \citet{karpukhin2020dense}. In inference time, the retriever passes $K=200$ passages $\mathcal{C}_r$ to reranker.

\item[Passage reranker.]
We use the RoBERTa-base \cite{liu2019roberta} and truncate the inputs to a maximum length of 256. The linear scheduler with 0.1 warmup proportion is used, the number of epochs is 5 and the model is validated every 40,000 optimization steps. 
We use learning rate $1.6\cdot10^{-4}$ and batch size 8. In training, the model reranks 24 passages per question with negatives uniformly sampled from top-400 passages retrieved by DPR.
During the inference, top-$K$ ($K=200$) retriever passages are rescored and passed to readers.

\item[Extractive reader.] The extractive reader encoder is based on pre-trained ELECTRA-large. 
Its inputs are truncated if they are longer than the allowed maximum size (512 tokens). 
During the training phase, all spans from all $p \in \mathcal{C}_{r}$\footnote{Note that we train on data from retriever, not reranker.} that match\footnote{Matching strategies are described in Appendix \ref{app:data_preprocessing}.} with at least one of the known answers are selected as target annotations. 
Therefore the annotations might appear in the wrong context. 
The extractive reader reads the top-$V = 128$ passages during the training phase and when it is used without the reranker. 
To demonstrate the effect of reranker, the reader reads only the top-$V = 24$ passages if the reranker is used.
We use a linear scheduler with a warmup for the first 20,000 steps for all models. 
The maximum number of training steps is 200,000. 
The model is validated every 20,000 steps, and the best checkpoint among validations is selected. 
The learning rate is $2 \cdot 10^{-5}$ and the optimization step was done after each training example.

\item[Generative reader.]
We utilize T5-large \cite{raffel2020exploring} and use a concatenation of question, passages and their respective titles at the Fusion-in-Decoder's input the same way as \citet{izacard2020distilling}.  
We truncate each passage to the length of 250 tokens for NQ. 
For TQ, as questions are significantly longer, we truncate whole inputs to the same size.
Following FiD for TQ, we use only human-generated answer. 
In training, the golden passage always comes first, if available, and we take the rest of passages as ranked by retriever up to $V_2$ passages.
\citet{izacard2020leveraging} trained FiD with $V_2 = 100$ passages at its input. However, such approach requires tremendous amount of GPU memory, and thus requires employing speed-memory trade-offs such as gradient checkpointing \cite{Chen2016TrainingDN}. Unlike the original approach, we use only $V_2 = 25$ passages in our FiD. We note that in practice combining reranker with shorter-context FiD yields results similar to original implementation with much lower memory consumption and better throughput in the R2-D2 setting\footnote{Due to the numerous decoder computations in answer re-ranking.}. We analyze the speed of our implementation in Appendix \ref{app:inference time}. 
Other hyperparameters are similar to the original work---batch size 64, learning rate $5 \cdot 10^{-5}$ but no learning rate schedule. 
In test time, we decode an answer via greedy decoding.
\end{description}

\section{Results and Analysis}

\begin{table}[t]
    \scalebox{0.63}{\begin{tabular}{llccc}
\cmidrule[\heavyrulewidth]{2-5}
\multicolumn{1}{c}{} & \multicolumn{1}{c}{\textbf{Method}} & \multicolumn{1}{c}{\textbf{NQ}}& \multicolumn{1}{c}{\textbf{TQ}} & \multicolumn{1}{c}{\textbf{\#$\boldsymbol{\theta}$}} \\ \cmidrule{2-5} 
\multirow{13}{*}{\rotatebox[origin=c]{90}{Extractive}}   & BM25+BERT \cite{mao2020generation}&    37.7    &    60.1                     &  110M                 \\
                     & Hard EM \cite{min2019discrete}      &    28.1   &    50.9                   &  110M                 \\
                     & Path Retriever \cite{asai2019learning}&  32.6   &    -                      &  447M                 \\ 
                     & Graph Retriever \cite{min2019knowledge}& 34.5   &    56.0                   &  110M                 \\
                     & ORQA \cite{lee-etal-2019-latent}           &    33.3  &    45.0             &  220M                 \\
                     & REALM \cite{guu2020realm}           &    40.4     &    -                    &  660M                 \\
                     & ProQA \cite{xiong2020progressively} &    34.3     &    -                    &  220M                 \\
                     & DPR \cite{karpukhin2020dense}       &    41.5     &    56.8                 &  220M                 \\
                     & RDR \cite{yang2020retriever}        &    42.1    &   57.0                   &  110M                \\
                     & GAR+DPR \cite{mao2020generation}    &    43.8    &    -                     &  626M                \\ 
                     & ColBERT \cite{khattab2020relevance} & 48.2  &    63.2$^{-}$                 &  440M                \\ 
                     & RIDER (GAR+DPR) \cite{mao2021reader} &    48.3  &    -                      &  626M                \\ 
                     & UnitedQA-E  \cite{cheng2021unitedqa}                         &    51.8  &    68.9                    &  440M                \\\cmidrule{2-5} 

\multirow{8}{*}{\rotatebox[origin=c]{90}{Generative}}      & BM25+SSG \cite{mao2020generation}&35.3 &    58.6                    &  406M \\
                     & T51.1+SSM \cite{roberts2020much}    &    35.2   &    61.6                    &  11B                  \\
                     & RAG \cite{lewis2020retrieval}       &    44.5   &    56.8                    &  516M                 \\ 
                     & DPR+SSG \cite{min2020ambigqa}       &    42.2   &    -                       &  516M                 \\ 
                     & FiD-base \cite{izacard2020leveraging}&   48.2   &    65.0                    &  333M                 \\ 
                     & FiD-large \cite{izacard2020leveraging}&  51.4   &    67.6                    &  848M                 \\ 
                     & FiD-large++ \cite{izacard2020memory}&    54.7   &    \textbf{73.3}           &  848M                 \\
                     & UnitedQA-G \cite{cheng2021unitedqa}                           &    52.3   &   68.6                     &  880M                \\\cmidrule{2-5}
                     & UnitedQA (Ens. E+G+G) \cite{cheng2021unitedqa}           &    54.7   &   70.5                     &  1.87B                \\\cmidrule{2-5} 

\multirow{4}{*}{\rotatebox[origin=c]{90}{Ours}}            & R1-D1 (Generative)              &    49.9        &    65.4                     &  848M                 \\
                     & R1-D1 (Extractive)                  &    50.8     &    65.0                    &  445M                 \\ 
                     & R2-D2 (21M)                         &    \textbf{55.0}   &    69.9             &  1.29B                     \\
                     & R2-D2 (21M) w/ HN-DPR               &    \textbf{55.9}   &    -             &  1.29B                     \\ \cmidrule[\heavyrulewidth]{2-5} 
\end{tabular}}
    \caption{Comparison with the state-of-the-art in EM. \#$\theta$ denotes the estimated amount of model parameters. 
    Symbol $^{-}$ reports the result only for smaller system with $220M$ parameters.  
    }
    \label{tab:systems}
\end{table}

\begin{table*}[ht]
    \centering
    \scalebox{1.00}{\begin{tabular}{cc|rrr|rrr|rrr}
\toprule
\multirow{2}{*}{\textbf{Readers}} & \multirow{2}{*}{\textbf{Fusion}} &  \multicolumn{3}{c|}{\textbf{NQ-Open}}           & \multicolumn{3}{c|}{\textbf{TQ-Open}}           & \multicolumn{3}{c}{\textbf{EfficientQA}}             \\
& & \multicolumn{1}{c}{ret.}     & \multicolumn{1}{c}{+rr}     & \multicolumn{1}{c|}{$\Delta$} & \multicolumn{1}{c}{ret.}     & \multicolumn{1}{c}{+rr}     & \multicolumn{1}{c|}{$\Delta$} & \multicolumn{1}{c}{ret.}     & \multicolumn{1}{c}{+rr}     & \multicolumn{1}{c}{$\Delta$} \\ \midrule
ext                               & -                                & 50.78 & 50.72          & -0.06                         & 65.01 & 65.46          & 0.45                          & 47.00 & 47.56          & 0.56                         \\
gen                               & -                                & 49.92 & 50.69          & 0.77                          & 65.38 & {\ul 69.14}    & 3.76                          & 44.83 & 47.33          & 2.50                         \\
ext+gen                           & naive                            & 51.88 & 52.44          & 0.56                          & 66.17 & 68.01          & 1.84                          & 47.06 & 49.11          & 2.05                         \\
ext+gen                           & aggr                             & 54.13 & 54.90          & 0.77                          & 67.42 & 68.66          & 1.24                          & 50.44 & 52.00          & 1.56                         \\
ext+gen                           & aggr+bd                          & 54.07 & \textbf{54.99} & 0.92                          & 67.37 & \textbf{69.94} & 2.57                          & 49.72 & \textbf{52.22} & 2.50                         \\ \bottomrule
\end{tabular}}%
    \caption{Ablation study. We report results for extractive (ext), generative (gen) and both readers (ext+gen) without (ret.) and with reranking (+rr). The $\Delta$ column shows the exact match difference caused by passage reranking.}
    \label{tab:ablation_study}
\end{table*}



The effectiveness of our approach is compared with the state-of-the-art in Table \ref{tab:systems}. 
Our system, composed of just the retriever and FiD reader R1-D1 (Generative), shows inferior performance compared to FiD-large. 
This is most likely caused by 4 times fewer passages at its input, as in \citet{izacard2020leveraging}. 
In contrast, our ELECTRA based extractive reader R1-D1 (Extractive) shows large gains compared to extractive state-of-the-art, while having the same retriever as DPR.
We hypothesize this may be caused by ELECTRA pre-training method, which shows strong performance through variety of tasks and we further show that it is also due to training and inference with large input size of 128 passages and better objective (discussed in Section \ref{sec:ext_reader_perf} and Appendix \ref{app:ext_r_ablations}).
Only system that matches the performance of our extractive reader is the concurrent work on UnitedQA-E \cite{cheng2021unitedqa}, which uses advanced regularization and HardEM techniques. We note that these are orthogonal to our approach and could potentially lead to further improvements.

Finally, we find that our R2-D2 system with 21M passages corpus is competitive even with FiD++, which uses DPR retriever improved via knowledge distillation, and 26M passage corpus, which also includes lists. 
Additionally, we evaluate our model with a better retrieval model (HN-DPR) based on the DPR checkpoint where hard negatives are mined using the retrieval model itself\footnote{\url{https://cutt.ly/Ux5Yt4h}}.
Note that we do not compare EfficientQA with state-of-the-art, as the previous works didn't reported results on dev set we use for testing.

\subsection{Reranker Performance}
Next, we compare the performance of our retriever, reranker and reader with Accuracy@K in Figure~\ref{fig:acc_k_nq}. 
The passage reranker improves the accuracy consistently and we observe the same trend on other datasets (Appendix \ref{app:accuracy_at_k}).
We also include analysis, where we rerank each passage $p_i$ according its $s^i_{passage}$ score from extractive reader.
We observe results similar or even better to reranker for $K<10$, indicating the extractive reader reranks well on its own.
However, in subsequent experiments we do not replace the reranker with reader because: (i) passage reranker has fewer parameters, (ii) extractive reading can run in parallel with reranking and generative reading as extractive reader is not benefiting from reranking, and (iii) passage reranking scores often improve results during score aggregation (see Section \ref{sec:component_fusion}).

\begin{figure}[t]
    \centering
    \begin{tikzpicture}
    \begin{axis}[
            smooth,
            thick,
            xmode=log,
            legend pos=south east,
            xlabel=K,
            ylabel=Accuracy@K,
            ytick={0.0,0.1,0.2,0.3,0.4,0.5,0.6,0.7,0.8,0.9,1.0},
            log ticks with fixed point,
            minor tick num=5,
            grid=both,
            grid style={line width=0.2pt, draw=gray!20},
            major grid style={line width=0.5pt,draw=gray!40},
            skip coords between index={2}{4},
            skip coords between index={5}{9},
            skip coords between index={10}{19},
            skip coords between index={20}{39},
            skip coords between index={40}{99},
            skip coords between index={100}{199},
            xmajorgrids,
            ymajorgrids,
            width=1.0\columnwidth,
            height=0.8\columnwidth
        ]
        \addplot table [x=k, y=r1-f, col sep=comma] {data/accuracy-at-k_nq-open-test.csv};
        \addplot table [x=k, y=r2-f, col sep=comma] {data/accuracy-at-k_nq-open-test.csv};
        \addplot+[mark=triangle*] table [x=topk, y=selection, col sep=comma] {data/accuracy-at-k_reader-full_nq-open-test.csv};
        
        \legend{
            retrieved, 
            reranked, 
            reader
        };
    \end{axis}%
\end{tikzpicture}
    \caption{Accuracy@K on test-data of NQ-Open.}%
    \label{fig:acc_k_nq}
\end{figure}
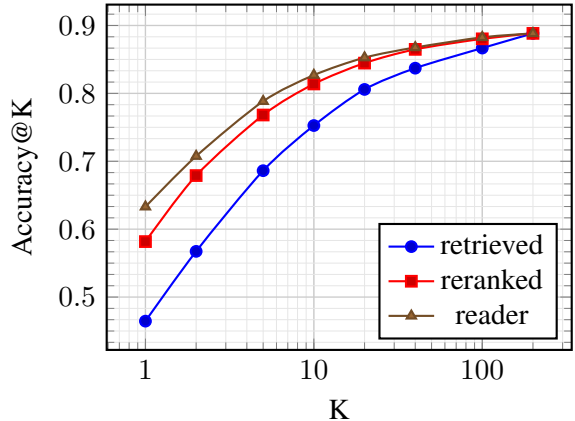%

\subsection{Extractive Reader Performance}
\label{sec:ext_reader_perf}
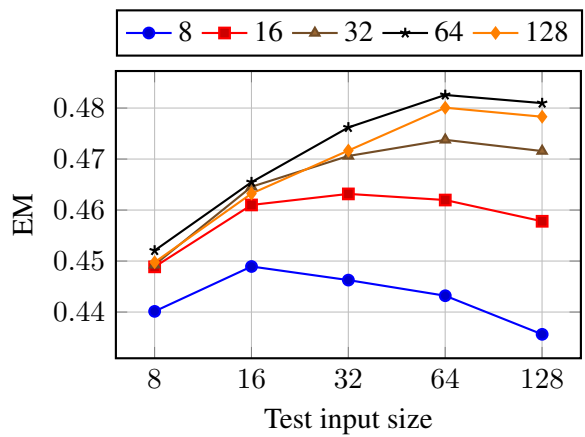
\begin{figure}[t]
    \centering
    \begin{tikzpicture}
    \begin{axis}[
            thick,
            xmode=log,
            ytick={0.44,0.45,0.46,0.47,0.48},
            log ticks with fixed point,
            log basis x=2,
            legend columns=-1,
            legend style={
                at={(0,1.05)}, 
                /tikz/every even column/.append style={column sep=1pt}, 
                legend cell align={left},  
                anchor=south west
            },
            xlabel=Test input size,
            ylabel=EM,
            xmajorgrids,
            ymajorgrids,
            width=1\columnwidth,
            height=0.7\columnwidth,
        ]
        \addplot table [x=test_input_size, y=8, col sep=comma] {data/extractive-reader-batch-sizes.csv};
        \addplot table [x=test_input_size, y=16, col sep=comma] {data/extractive-reader-batch-sizes.csv};
        \addplot+[mark=triangle*] table [x=test_input_size, y=32, col sep=comma] {data/extractive-reader-batch-sizes.csv};
        \addplot table [x=test_input_size, y=64, col sep=comma] {data/extractive-reader-batch-sizes.csv};
        \addplot+[color=orange,mark options={fill=orange}] table [x=test_input_size, y=128, col sep=comma] {data/extractive-reader-batch-sizes.csv};

        \addlegendentry{8}
        \addlegendentry{16}
        \addlegendentry{32}
        \addlegendentry{64}
        \addlegendentry{128}
    \end{axis}%
\end{tikzpicture}
    \caption{Influence of test input size on extractive reader's performance for various train input sizes (different curves) on NQ-Open test dataset.}%
    \label{fig:ext_reader_batch_analysis}
\end{figure}%
In order to investigate the influence of the number of input passages on the extractive reader's performance, we trained multiple ELECTRA-base models, each with different input size. In test time, we evaluate each of them on various input sizes. Figure \ref{fig:ext_reader_batch_analysis} shows that increasing train/test input size has a positive influence on extractive reader's performance. However, input size 128 doesn't seem to increase the performance anymore.


\subsection{Ablations}

The ablations are listed in Table~\ref{tab:ablation_study}. 
We ablate results without using passage reranker, with separate readers and their combination and with different stages of component fusion. 
Namely, performing a \emph{naive} answer re-ranking by generative reader means the system chooses the most probable answer span among the top-$M$ spans provided by the extractive reader according to generative reader log-probabilities as shown in equation \eqref{eq:genrerank}. 
Analogously, the \emph{aggr} fusion denotes that the system chooses the most probable answer span according to aggregated scores, as in equation \eqref{eq:aggloss}. 
Finally, the \emph{aggr+bd} fusion denotes the binary decision, as shown in equation \eqref{eq:bdformula}. 

\begin{table}[t]
    \centering
    \scalebox{1.0}{\begin{tabular}{cccccc}
\cmidrule[\heavyrulewidth]{2-6}
& $\boldsymbol{P}_*$ & \multicolumn{1}{c}{$\emptyset$} & \multicolumn{1}{c}{\textbf{$\{r\}$}} & \multicolumn{1}{c}{\textbf{$\{rr\}$}} & \multicolumn{1}{c}{\textbf{$\{r,rr\}$}} \\
\cmidrule{2-6}
\multirow{3}{*}{\rotatebox[origin=c]{90}{\small NQ-Open}}      & $\{e\}$       & 50.72                           & 51.41                                & 51.55                                 & 51.69                                   \\
                       & $\{g\}$       & 52.44                           & 52.88                                & 53.35                                 & 53.19                                   \\
                       & $\{e,g\}$     & 54.63                           & \textbf{55.10}                       & 54.82                                 & 54.90                                   \\ \cmidrule{2-6}
\multirow{3}{*}{\rotatebox[origin=c]{90}{\small TQ-Open}}      & $\{e\}$       & 65.54                           & 65.64                                & 65.60                                 & 65.61                                   \\
                       & $\{g\}$       & 68.25 & 68.17 & 68.21 & 68.26 \\
                       & $\{e,g\}$     & 68.45 & 68.57 & \textbf{68.66} & \textbf{68.66} \\ \cmidrule[\heavyrulewidth]{2-6}
\end{tabular}}
    \caption{Results for different pipeline components used for score aggregation on NQ-Open a TQ-Open. See text for details.}
    \label{tab:score-aggr}
\end{table}
As expected, we observe that reranker improves the results consistently for generative model in all cases. 
The gains are especially large for TQ-Open (over 3.7 EM, underscored in Table~\ref{tab:ablation_study}). 
In fact, the results are comparable to \citet{izacard2020leveraging}, suggesting that using the FiD reader with smaller context window and reranker is a reasonable alternative to memory inefficient FiD with large input size.
Furthermore as expected, the extractive reader without reranker already has top-128 passages at the input, and improvements from the passage reranking are only negligible if any (less than 1 EM).

Finally, the results on NQ-Open and EfficientQA suggest applying the binary decision does not bring large improvements over the score aggregation if any. 
However, notice that this is not the case for TQ-Open, where the generative reader performs significantly better compared to extractive reader, suggesting both component fusions play important role in the system.

\begin{table}[t]
    \centering
    \scalebox{1.0}{\begin{tabular}{@{}cccccc}

\cmidrule[\heavyrulewidth]{2-6}
& $\boldsymbol{P}_*$ & \multicolumn{1}{c}{$\emptyset$} & \multicolumn{1}{c}{$\{r\}$} & \multicolumn{1}{c}{$\{rr\}$} & \multicolumn{1}{c}{$\{r,rr\}$} \\ \cmidrule{2-6}
\multirow{3}{*}{\rotatebox[origin=c]{90}{\small NQ-Open}} & $\{e\}$   & 52.85 & 53.30          & 53.10 & 52.94 \\
                                                          & $\{g\}$   & 52.44 & 52.77          & 53.21 & 53.07 \\
                                                          & $\{e,g\}$ & 54.35 & \textbf{55.10} & 54.46 & 54.99 \\ \cmidrule{2-6}
\multirow{3}{*}{\rotatebox[origin=c]{90}{\small TQ-Open}} & $\{e\}$   & 69.34 & 69.28 & 69.23 & 69.26 \\
                                                          & $\{g\}$   & 69.76 & 69.71 & 69.65 & 69.77 \\
                                                          & $\{e,g\}$ & 69.80 & 69.89 & 69.88 & \textbf{69.94} \\ 
\cmidrule[\heavyrulewidth]{2-6}
\end{tabular}}
    \caption{Results for binary decision on NQ-Open and TQ-Open for different aggregated pipeline components from Table \ref{tab:score-aggr}.}
    \label{tab:binary-decision}
\end{table}


\subsection{Component Fusion}
\label{sec:component_fusion}
Furthermore, we analyze the performance of each component combination in the score aggregation and its impact on the component fusion via binary decision.
Both fusions are tuned on validation data and reported on test data of the NQ-Open and TQ-Open datasets. See Appendix \ref{app:additional_comp_fusion} for analysis on additional datasets.
Table \ref{tab:score-aggr} shows all relevant combinations of ranker \emph{r}, reranker \emph{rr}, extractive reader \emph{e} and generative reader \emph{g} probabilities used in score aggregation. 
In overall, we observe that combining retriever and reranker scores with the reader leads to better or equal performance. On NQ-Open, we observe minor improvements up to \textasciitilde1 EM. However, there is no difference on TQ-Open. 

The impact of adding a binary decision after the score aggregation is shown in Table~\ref{tab:binary-decision}. 
Interestingly, the binary decision component significantly improves the performance only without reranked answer scores ($\{e\}$ rows in both tables). 
However, fusing the generative and extractive reader via binary decision performs significantly worse on NQ-Open than fusing both readers together with score aggregation ($\{e\}$ row in Table~\ref{tab:binary-decision} vs. $\{e,g\}$ row in Table~\ref{tab:score-aggr}). As already noted in ablations, we find this to be quite the opposite for TQ-Open. We hypothesize that the binary decision is strong in cases, where generative reader performs better to extractive reader (the case of TQ-Open). We argue that if the generative reader is better, the abstractive answer should be used far more often, than when it's not.
We support the hypothesis by analyzing the proportion of test samples, on which the binary decision component activated (i.e. an abstractive prediction was selected). On NQ-Open, the component almost never activated (only on 3.5\% samples), but this proportion is much higher (26.6\%) on TQ-Open. 

\begin{table}[t]
    \centering
    \scalebox{1.0}{\begin{tabular}{ccccc}
\toprule
\textbf{Readers}      & \textbf{Ensemble} & \textbf{EM}    & \textbf{$\Delta_{ext}$} & \textbf{$\Delta_{gen}$} \\ \midrule
\multirow{3}{*}{ext} & -                 & 46.79          & -                       & -                      \\
                     & 2 models        & 48.30          & 1.51                    & -                      \\
                     & 3 models        & 48.59          & 1.80                    & -                      \\ \midrule
\multirow{3}{*}{gen} & -                 & 45.00          & -                       & -                      \\
                     & 2 models        & 46.30          & -                       & 1.30                   \\
                     & 3 models        & 46.59          & -                       & 1.59                   \\ \midrule
ext+gen              & aggr           & \textbf{49.92} & \textbf{3.13}           & \textbf{4.92}    
\\ \bottomrule
\end{tabular}}%
    \caption{Comparison between ensembling via posterior averaging and score aggregation on NQ-Open.}
    \label{fig:reader_ensemble}
\end{table}
\subsection{Comparison with Posterior Averaging}
Finally, we compare our score aggregation with the ensemble computed via posterior probability averaging. In particular, we train three extractive and generative base-sized models initialized with different random seed. We do not use reranker in this experiment, and set train/test input size of extractive reader to 32. We assess the predictions using the averaged posterior probabilities and compare their average performance with score aggregation in Table \ref{fig:reader_ensemble}. Concretely, we compare with average of all 2 model ensembles (2 models) and with an ensemble of all 3 checkpoints (3 models). We observe two to three times improvement of score aggregation over the posterior probability averaging on NQ-Open test data.


\section{Related Work}
\begin{description}[style=unboxed,leftmargin=0em,listparindent=\parindent,parsep=0pt,]


\item[Passage reranking.]
Previous work in QA based on neural nets used Bi-LSTM encoders \cite{wang2018r3,lee2018ranking} that score each document independently. Over time, Bi-LSTM were replaced by BERT-like transformer encoders \cite{qiao2019understanding,wang2019multi}. 
 For document ranking, \citet{nogueira2019multistage} proposed a multi-stage architecture. The first stage scores each document independently, and the second estimates the more relevant document from all document pairs. Another document ranking approach uses the seq2seq model to generate a true or false answer to the document's relevance to the query \cite{nogueira2020document}. 
Recent works have often focused on effective reranking. \citet{xin2020early} achieved inference speedup using early exiting, \citet{jang2020document} proposed a smaller and faster model, and \citet{mao2021reader} came up with a method which uses reader's predictions to rerank the passages.
Our reranker is most similar to \citet{nogueira2019passage, luan2020sparse}, except that unlike in IR, we assume there is just one correct passage and thus train our model via categorical cross-entropy.

\item[Multipassage Reading Comprehension] 
Related work considers generative and extractive approaches towards modeling the reader. The generative reader generates an answer while conditioned on question alone \cite{roberts2020much}, or question with relevant passages \cite{lewis2020retrieval,min2020ambigqa}. \citet{izacard2020leveraging} showed it suffices to concatenate the passages in the decoder of seq2seq model, increasing the amount of top-passages the model can depend on dramatically.
The extractive reader used in Open-QA assumes that the answer is a continuous span string in located in retrieved paragraphs  \cite{chen2017reading}. \citet{clark-gardner-2018-simple} proposed to aggregate the probabilities of distantly supervised answer matches via maximum marginal likelihood (MML). \citet{lin2018denoising} proposed to denoise distantly supervised answer string matches in MML via paragraph-ranker. 
\citet{cheng2020probabilistic} experimented with different assumptions for MML, showing improvement when marginalizing over components of span probability independently. \citet{fajcik2020rethinking} proposed to model joint span probability directly via compound objective, instead of modeling the probability of span's start and end independently. \citet{karpukhin2020dense} incorporated an independent passage classifier loss to his MML objective. Our objective is similar to the last work, except that it uses joint component and also optimizes MML over relevant passages' probabilities.

\item[Component Fusion.]
 \citet{yang-etal-2019-end} also combined BM25 ranker and reader scores via linear combination. Our work can be seen as an extension of this idea to combining the scores of all pipeline's components.
\citet{iyerreconsider2020} proposed a system which directly learns to rerank question-passage-answer triplets proposed via extractive model. However, reranking answers from their large extractive model via large reranker leads to \textasciitilde1 EM improvement absolute, whereas R2-D2s score aggregation improves 4 to 5 EM w.r.t. the extractive reader. Concurrently with our work, \citet{cheng2021unitedqa} proposed hard voting ensembling scheme to combine the reader predictions. Firstly, each model from an ensemble produces its best prediction, then the votes for identical predictions are combined, omitting the scores produced by the individual models. The authors obtained best results using two FiD readers and single extractive reader, leading to 1.6 and 2.4 EM improvement on TQ-Open and NQ-Open, compared to their best single extractive or generative model.
 
\end{description}

\section{Conclusion}
This work proposed R2-D2, a novel state-of-the-art pipeline for open-domain QA based on 4 components: retriever, reranker, generative reader and extractive reader. 
We showed that employing a reranker is a reasonable alternative to using large passage counts at the input of both the extractive and the generative reader. 
Our results on NQ-Open and EfficientQA showed that the extractive and the generative reader could perform equally in Open-QA, although the generative reader is twice the size of the extractive reader. 
On the other hand, we observe the extractive reader underperforms on TQ-Open. We hypothesize, that the cause is (1) the complexity of trivia questions with many entities, which often require combining evidence from multiple passages --- these are impossible to answer for the extractive reader by design --- and (2) the expensive hyperparameter search, as we used NQ-Open hyperparameters also for TQ-Open.
Contrary to belief based on the results on different datasets \cite{yang-etal-2019-end,wang-etal-2019-multi,izacard2020leveraging}, we found the extractive reader can also benefit from larger input sizes, both in training and test time. 
Finally, we proposed a component fusion, which allows merging the complementary behavior of generative and extractive approaches along with the ranking components and found it improves the results significantly. 
Due to its heterogenous and modular nature, our pipeline forms an ideal base for future research of component integration in modern Open-QA. 

%
%





\section*{Acknowledgments}
We would like to thank Jan Doležal for implementing an R2-D2 demo.
This work was supported by the Czech Ministry of Education, Youth and Sports, subprogram INTERCOST, project code: LTC18054.
The computation used the infrastructure supported by the Ministry of Education, Youth and Sports of the Czech Republic through the e-INFRA CZ (ID:90140).

\bibliography{anthology,custom}

\begin{thebibliography}{49}
\expandafter\ifx\csname natexlab\endcsname\relax\def\natexlab#1{#1}\fi

\bibitem[{Asai et~al.(2020)Asai, Hashimoto, Hajishirzi, Socher, and
  Xiong}]{asai2019learning}
Akari Asai, Kazuma Hashimoto, Hannaneh Hajishirzi, Richard Socher, and Caiming
  Xiong. 2020.
\newblock \href {https://openreview.net/forum?id=SJgVHkrYDH} {Learning to
  retrieve reasoning paths over wikipedia graph for question answering}.
\newblock In \emph{8th International Conference on Learning Representations,
  {ICLR} 2020, Addis Ababa, Ethiopia, April 26-30, 2020}. OpenReview.net.

\bibitem[{Beltagy et~al.(2020)Beltagy, Peters, and
  Cohan}]{Beltagy2020Longformer}
Iz~Beltagy, Matthew~E. Peters, and Arman Cohan. 2020.
\newblock \href {https://arxiv.org/pdf/2004.05150.pdf} {Longformer: The
  long-document transformer}.
\newblock \emph{arXiv preprint arXiv:2004.05150}.

\bibitem[{Chen et~al.(2017)Chen, Fisch, Weston, and Bordes}]{chen2017reading}
Danqi Chen, Adam Fisch, Jason Weston, and Antoine Bordes. 2017.
\newblock \href {https://doi.org/10.18653/v1/P17-1171} {Reading {W}ikipedia to
  answer open-domain questions}.
\newblock In \emph{Proceedings of the 55th Annual Meeting of the Association
  for Computational Linguistics (Volume 1: Long Papers)}, pages 1870--1879,
  Vancouver, Canada. Association for Computational Linguistics.

\bibitem[{Chen et~al.(2016)Chen, Xu, Zhang, and Guestrin}]{Chen2016TrainingDN}
Tianqi Chen, Bing Xu, Chiyuan Zhang, and Carlos Guestrin. 2016.
\newblock Training deep nets with sublinear memory cost.
\newblock \emph{ArXiv}, abs/1604.06174.

\bibitem[{Cheng et~al.(2020{\natexlab{a}})Cheng, Chang, Lee, and
  Toutanova}]{cheng2020probabilistic}
Hao Cheng, Ming-Wei Chang, Kenton Lee, and Kristina Toutanova.
  2020{\natexlab{a}}.
\newblock \href {https://doi.org/10.18653/v1/2020.acl-main.501} {Probabilistic
  assumptions matter: Improved models for distantly-supervised document-level
  question answering}.
\newblock In \emph{Proceedings of the 58th Annual Meeting of the Association
  for Computational Linguistics}, pages 5657--5667, Online. Association for
  Computational Linguistics.

\bibitem[{Cheng et~al.(2020{\natexlab{b}})Cheng, Chang, Lee, and
  Toutanova}]{cheng-etal-2020-probabilistic}
Hao Cheng, Ming-Wei Chang, Kenton Lee, and Kristina Toutanova.
  2020{\natexlab{b}}.
\newblock \href {https://doi.org/10.18653/v1/2020.acl-main.501} {Probabilistic
  assumptions matter: Improved models for distantly-supervised document-level
  question answering}.
\newblock In \emph{Proceedings of the 58th Annual Meeting of the Association
  for Computational Linguistics}, pages 5657--5667, Online. Association for
  Computational Linguistics.

\bibitem[{Cheng et~al.(2021)Cheng, Shen, Liu, He, Chen, and
  Gao}]{cheng2021unitedqa}
Hao Cheng, Yelong Shen, Xiaodong Liu, Pengcheng He, Weizhu Chen, and Jianfeng
  Gao. 2021.
\newblock \href {https://doi.org/10.18653/v1/2021.acl-long.240} {{U}nited{QA}:
  {A} hybrid approach for open domain question answering}.
\newblock In \emph{Proceedings of the 59th Annual Meeting of the Association
  for Computational Linguistics and the 11th International Joint Conference on
  Natural Language Processing (Volume 1: Long Papers)}, pages 3080--3090,
  Online. Association for Computational Linguistics.

\bibitem[{Clark and Gardner(2018)}]{clark-gardner-2018-simple}
Christopher Clark and Matt Gardner. 2018.
\newblock \href {https://doi.org/10.18653/v1/P18-1078} {Simple and effective
  multi-paragraph reading comprehension}.
\newblock In \emph{Proceedings of the 56th Annual Meeting of the Association
  for Computational Linguistics (Volume 1: Long Papers)}, pages 845--855,
  Melbourne, Australia. Association for Computational Linguistics.

\bibitem[{Fajcik et~al.(2020)Fajcik, Jon, Kesiraju, and
  Smrz}]{fajcik2020rethinking}
Martin Fajcik, Josef Jon, Santosh Kesiraju, and Pavel Smrz. 2020.
\newblock \href {https://arxiv.org/pdf/2008.12804.pdf} {Rethinking the
  objectives of extractive question answering}.
\newblock \emph{arXiv preprint arXiv:2008.12804}.

\bibitem[{Guu et~al.(2020)Guu, Lee, Tung, Pasupat, and Chang}]{guu2020realm}
Kelvin Guu, Kenton Lee, Zora Tung, Panupong Pasupat, and Ming-Wei Chang. 2020.
\newblock Realm: Retrieval-augmented language model pre-training.
\newblock \emph{arXiv preprint arXiv:2002.08909}.

\bibitem[{Iyer et~al.(2020)Iyer, Min, Mehdad, and Yih}]{iyerreconsider2020}
Srinivasan Iyer, Sewon Min, Yashar Mehdad, and Wen{-}tau Yih. 2020.
\newblock \href {http://arxiv.org/abs/2010.10757} {{RECONSIDER:} re-ranking
  using span-focused cross-attention for open domain question answering}.
\newblock \emph{CoRR}, abs/2010.10757.

\bibitem[{Izacard and Grave(2020)}]{izacard2020distilling}
Gautier Izacard and Edouard Grave. 2020.
\newblock Distilling knowledge from reader to retriever for question answering.
\newblock \emph{arXiv preprint arXiv:2012.04584}.

\bibitem[{Izacard and Grave(2021)}]{izacard2020leveraging}
Gautier Izacard and Edouard Grave. 2021.
\newblock \href {https://www.aclweb.org/anthology/2021.eacl-main.74}
  {Leveraging passage retrieval with generative models for open domain question
  answering}.
\newblock In \emph{Proceedings of the 16th Conference of the European Chapter
  of the Association for Computational Linguistics: Main Volume}, pages
  874--880, Online. Association for Computational Linguistics.

\bibitem[{Izacard et~al.(2020)Izacard, Petroni, Hosseini, De~Cao, Riedel, and
  Grave}]{izacard2020memory}
Gautier Izacard, Fabio Petroni, Lucas Hosseini, Nicola De~Cao, Sebastian
  Riedel, and Edouard Grave. 2020.
\newblock A memory efficient baseline for open domain question answering.
\newblock \emph{arXiv preprint arXiv:2012.15156}.

\bibitem[{Jang and Kim(2020)}]{jang2020document}
Youngjin Jang and Harksoo Kim. 2020.
\newblock \href {https://doi.org/10.3390/app10217547} {Document re-ranking
  model for machine-reading and comprehension}.
\newblock \emph{Applied Sciences}, 10(21).

\bibitem[{Joshi et~al.(2017)Joshi, Choi, Weld, and
  Zettlemoyer}]{joshi-etal-2017-triviaqa}
Mandar Joshi, Eunsol Choi, Daniel Weld, and Luke Zettlemoyer. 2017.
\newblock \href {https://doi.org/10.18653/v1/P17-1147} {{T}rivia{QA}: A large
  scale distantly supervised challenge dataset for reading comprehension}.
\newblock In \emph{Proceedings of the 55th Annual Meeting of the Association
  for Computational Linguistics (Volume 1: Long Papers)}, pages 1601--1611,
  Vancouver, Canada. Association for Computational Linguistics.

\bibitem[{Karpukhin et~al.(2020)Karpukhin, Oguz, Min, Lewis, Wu, Edunov, Chen,
  and Yih}]{karpukhin2020dense}
Vladimir Karpukhin, Barlas Oguz, Sewon Min, Patrick Lewis, Ledell Wu, Sergey
  Edunov, Danqi Chen, and Wen-tau Yih. 2020.
\newblock \href {https://doi.org/10.18653/v1/2020.emnlp-main.550} {Dense
  passage retrieval for open-domain question answering}.
\newblock In \emph{Proceedings of the 2020 Conference on Empirical Methods in
  Natural Language Processing (EMNLP)}, pages 6769--6781, Online. Association
  for Computational Linguistics.

\bibitem[{Khattab et~al.(2020)Khattab, Potts, and
  Zaharia}]{khattab2020relevance}
Omar Khattab, Christopher Potts, and Matei Zaharia. 2020.
\newblock Relevance-guided supervision for open{QA} with col{BERT}.
\newblock \emph{arXiv preprint arXiv:2007.00814}.

\bibitem[{Kwiatkowski et~al.(2019)Kwiatkowski, Palomaki, Redfield, Collins,
  Parikh, Alberti, Epstein, Polosukhin, Devlin, Lee, Toutanova, Jones, Kelcey,
  Chang, Dai, Uszkoreit, Le, and Petrov}]{kwiatkowski2019natural}
Tom Kwiatkowski, Jennimaria Palomaki, Olivia Redfield, Michael Collins, Ankur
  Parikh, Chris Alberti, Danielle Epstein, Illia Polosukhin, Jacob Devlin,
  Kenton Lee, Kristina Toutanova, Llion Jones, Matthew Kelcey, Ming-Wei Chang,
  Andrew~M. Dai, Jakob Uszkoreit, Quoc Le, and Slav Petrov. 2019.
\newblock \href {https://doi.org/10.1162/tacl_a_00276} {Natural questions: A
  benchmark for question answering research}.
\newblock \emph{Transactions of the Association for Computational Linguistics},
  7:452--466.

\bibitem[{Lee et~al.(2018)Lee, Yun, Kim, Ko, and Kang}]{lee2018ranking}
Jinhyuk Lee, Seongjun Yun, Hyunjae Kim, Miyoung Ko, and Jaewoo Kang. 2018.
\newblock \href {https://doi.org/10.18653/v1/D18-1053} {Ranking paragraphs for
  improving answer recall in open-domain question answering}.
\newblock In \emph{Proceedings of the 2018 Conference on Empirical Methods in
  Natural Language Processing}, pages 565--569, Brussels, Belgium. Association
  for Computational Linguistics.

\bibitem[{Lee et~al.(2019)Lee, Chang, and Toutanova}]{lee-etal-2019-latent}
Kenton Lee, Ming-Wei Chang, and Kristina Toutanova. 2019.
\newblock \href {https://doi.org/10.18653/v1/P19-1612} {Latent retrieval for
  weakly supervised open domain question answering}.
\newblock In \emph{Proceedings of the 57th Annual Meeting of the Association
  for Computational Linguistics}, pages 6086--6096, Florence, Italy.
  Association for Computational Linguistics.

\bibitem[{Lewis et~al.(2021)Lewis, Stenetorp, and
  Riedel}]{lewis-etal-2021-question}
Patrick Lewis, Pontus Stenetorp, and Sebastian Riedel. 2021.
\newblock \href {https://www.aclweb.org/anthology/2021.eacl-main.86} {Question
  and answer test-train overlap in open-domain question answering datasets}.
\newblock In \emph{Proceedings of the 16th Conference of the European Chapter
  of the Association for Computational Linguistics: Main Volume}, pages
  1000--1008, Online. Association for Computational Linguistics.

\bibitem[{Lewis et~al.(2020)Lewis, Perez, Piktus, Petroni, Karpukhin, Goyal,
  K{\"{u}}ttler, Lewis, Yih, Rockt{\"{a}}schel, Riedel, and
  Kiela}]{lewis2020retrieval}
Patrick S.~H. Lewis, Ethan Perez, Aleksandra Piktus, Fabio Petroni, Vladimir
  Karpukhin, Naman Goyal, Heinrich K{\"{u}}ttler, Mike Lewis, Wen{-}tau Yih,
  Tim Rockt{\"{a}}schel, Sebastian Riedel, and Douwe Kiela. 2020.
\newblock \href
  {https://proceedings.neurips.cc/paper/2020/hash/6b493230205f780e1bc26945df7481e5-Abstract.html}
  {Retrieval-augmented generation for knowledge-intensive {NLP} tasks}.
\newblock In \emph{Advances in Neural Information Processing Systems 33: Annual
  Conference on Neural Information Processing Systems 2020, NeurIPS 2020,
  December 6-12, 2020, virtual}.

\bibitem[{Lin et~al.(2018)Lin, Ji, Liu, and Sun}]{lin2018denoising}
Yankai Lin, Haozhe Ji, Zhiyuan Liu, and Maosong Sun. 2018.
\newblock \href {https://doi.org/10.18653/v1/P18-1161} {Denoising distantly
  supervised open-domain question answering}.
\newblock In \emph{Proceedings of the 56th Annual Meeting of the Association
  for Computational Linguistics (Volume 1: Long Papers)}, pages 1736--1745,
  Melbourne, Australia. Association for Computational Linguistics.

\bibitem[{Liu et~al.(2019)Liu, Ott, Goyal, Du, Joshi, Chen, Levy, Lewis,
  Zettlemoyer, and Stoyanov}]{liu2019roberta}
Yinhan Liu, Myle Ott, Naman Goyal, Jingfei Du, Mandar Joshi, Danqi Chen, Omer
  Levy, Mike Lewis, Luke Zettlemoyer, and Veselin Stoyanov. 2019.
\newblock Roberta: A robustly optimized bert pretraining approach.
\newblock \emph{arXiv preprint arXiv:1907.11692}.

\bibitem[{Loshchilov and Hutter(2019)}]{loshchilov2017decoupled}
Ilya Loshchilov and Frank Hutter. 2019.
\newblock \href {https://openreview.net/forum?id=Bkg6RiCqY7} {Decoupled weight
  decay regularization}.
\newblock In \emph{7th International Conference on Learning Representations,
  {ICLR} 2019, New Orleans, LA, USA, May 6-9, 2019}. OpenReview.net.

\bibitem[{Luan et~al.(2021)Luan, Eisenstein, Toutanova, and
  Collins}]{luan2020sparse}
Yi~Luan, Jacob Eisenstein, Kristina Toutanova, and Michael Collins. 2021.
\newblock Sparse, dense, and attentional representations for text retrieval.
\newblock \emph{Transactions of the Association for Computational Linguistics}.

\bibitem[{Mao et~al.(2020)Mao, He, Liu, Shen, Gao, Han, and
  Chen}]{mao2020generation}
Yuning Mao, Pengcheng He, Xiaodong Liu, Yelong Shen, Jianfeng Gao, Jiawei Han,
  and Weizhu Chen. 2020.
\newblock Generation-augmented retrieval for open-domain question answering.
\newblock \emph{arXiv preprint arXiv:2009.08553}.

\bibitem[{Mao et~al.(2021)Mao, He, Liu, Shen, Gao, Han, and
  Chen}]{mao2021reader}
Yuning Mao, Pengcheng He, Xiaodong Liu, Yelong Shen, Jianfeng Gao, Jiawei Han,
  and Weizhu Chen. 2021.
\newblock Reader-guided passage reranking for open-domain question answering.
\newblock \emph{arXiv preprint arXiv:2101.00294}.

\bibitem[{Min et~al.(2021)Min, Boyd-Graber, Alberti, Chen, Choi, Collins, Guu,
  Hajishirzi, Lee, Palomaki et~al.}]{min2021neurips}
Sewon Min, Jordan Boyd-Graber, Chris Alberti, Danqi Chen, Eunsol Choi, Michael
  Collins, Kelvin Guu, Hannaneh Hajishirzi, Kenton Lee, Jennimaria Palomaki,
  et~al. 2021.
\newblock Neur{IPS} 2020 {EfficientQA} competition: Systems, analyses and
  lessons learned.
\newblock \emph{arXiv preprint arXiv:2101.00133}.

\bibitem[{Min et~al.(2019{\natexlab{a}})Min, Chen, Hajishirzi, and
  Zettlemoyer}]{min2019discrete}
Sewon Min, Danqi Chen, Hannaneh Hajishirzi, and Luke Zettlemoyer.
  2019{\natexlab{a}}.
\newblock \href {https://doi.org/10.18653/v1/D19-1284} {A discrete hard {EM}
  approach for weakly supervised question answering}.
\newblock In \emph{Proceedings of the 2019 Conference on Empirical Methods in
  Natural Language Processing and the 9th International Joint Conference on
  Natural Language Processing (EMNLP-IJCNLP)}, pages 2851--2864, Hong Kong,
  China. Association for Computational Linguistics.

\bibitem[{Min et~al.(2019{\natexlab{b}})Min, Chen, Zettlemoyer, and
  Hajishirzi}]{min2019knowledge}
Sewon Min, Danqi Chen, Luke Zettlemoyer, and Hannaneh Hajishirzi.
  2019{\natexlab{b}}.
\newblock Knowledge guided text retrieval and reading for open domain question
  answering.
\newblock \emph{arXiv preprint arXiv:1911.03868}.

\bibitem[{Min et~al.(2020)Min, Michael, Hajishirzi, and
  Zettlemoyer}]{min2020ambigqa}
Sewon Min, Julian Michael, Hannaneh Hajishirzi, and Luke Zettlemoyer. 2020.
\newblock \href {https://doi.org/10.18653/v1/2020.emnlp-main.466} {{A}mbig{QA}:
  Answering ambiguous open-domain questions}.
\newblock In \emph{Proceedings of the 2020 Conference on Empirical Methods in
  Natural Language Processing (EMNLP)}, pages 5783--5797, Online. Association
  for Computational Linguistics.

\bibitem[{Nogueira and Cho(2019)}]{nogueira2019passage}
Rodrigo Nogueira and Kyunghyun Cho. 2019.
\newblock Passage re-ranking with bert.
\newblock \emph{arXiv preprint arXiv:1901.04085}.

\bibitem[{Nogueira et~al.(2020)Nogueira, Jiang, Pradeep, and
  Lin}]{nogueira2020document}
Rodrigo Nogueira, Zhiying Jiang, Ronak Pradeep, and Jimmy Lin. 2020.
\newblock \href {https://doi.org/10.18653/v1/2020.findings-emnlp.63} {Document
  ranking with a pretrained sequence-to-sequence model}.
\newblock In \emph{Findings of the Association for Computational Linguistics:
  EMNLP 2020}, pages 708--718, Online. Association for Computational
  Linguistics.

\bibitem[{Nogueira et~al.(2019)Nogueira, Yang, Cho, and
  Lin}]{nogueira2019multistage}
Rodrigo Nogueira, Wei Yang, Kyunghyun Cho, and Jimmy Lin. 2019.
\newblock \href {https://arxiv.org/abs/1910.14424} {Multi-stage document
  ranking with {BERT}}.
\newblock \emph{arXiv preprint arXiv:1910.14424}.

\bibitem[{Paszke et~al.(2019)Paszke, Gross, Massa, Lerer, Bradbury, Chanan,
  Killeen, Lin, Gimelshein, Antiga, Desmaison, K{\"{o}}pf, Yang, DeVito,
  Raison, Tejani, Chilamkurthy, Steiner, Fang, Bai, and
  Chintala}]{paszke2019pytorch}
Adam Paszke, Sam Gross, Francisco Massa, Adam Lerer, James Bradbury, Gregory
  Chanan, Trevor Killeen, Zeming Lin, Natalia Gimelshein, Luca Antiga, Alban
  Desmaison, Andreas K{\"{o}}pf, Edward Yang, Zachary DeVito, Martin Raison,
  Alykhan Tejani, Sasank Chilamkurthy, Benoit Steiner, Lu~Fang, Junjie Bai, and
  Soumith Chintala. 2019.
\newblock \href
  {https://proceedings.neurips.cc/paper/2019/hash/bdbca288fee7f92f2bfa9f7012727740-Abstract.html}
  {Pytorch: An imperative style, high-performance deep learning library}.
\newblock In \emph{Advances in Neural Information Processing Systems 32: Annual
  Conference on Neural Information Processing Systems 2019, NeurIPS 2019,
  December 8-14, 2019, Vancouver, BC, Canada}, pages 8024--8035.

\bibitem[{Qiao et~al.(2019)Qiao, Xiong, Liu, and Liu}]{qiao2019understanding}
Yifan Qiao, Chenyan Xiong, Zhenghao Liu, and Zhiyuan Liu. 2019.
\newblock \href {https://arxiv.org/abs/1904.07531} {Understanding the behaviors
  of {BERT} in ranking}.
\newblock \emph{arXiv preprint arXiv:1904.07531}.

\bibitem[{Raffel et~al.(2020)Raffel, Shazeer, Roberts, Lee, Narang, Matena,
  Zhou, Li, and Liu}]{raffel2020exploring}
Colin Raffel, Noam Shazeer, Adam Roberts, Katherine Lee, Sharan Narang, Michael
  Matena, Yanqi Zhou, Wei Li, and Peter~J Liu. 2020.
\newblock Exploring the limits of transfer learning with a unified text-to-text
  transformer.
\newblock \emph{Journal of Machine Learning Research}, 21:1--67.

\bibitem[{Roberts et~al.(2020)Roberts, Raffel, and Shazeer}]{roberts2020much}
Adam Roberts, Colin Raffel, and Noam Shazeer. 2020.
\newblock \href {https://doi.org/10.18653/v1/2020.emnlp-main.437} {How much
  knowledge can you pack into the parameters of a language model?}
\newblock In \emph{Proceedings of the 2020 Conference on Empirical Methods in
  Natural Language Processing (EMNLP)}, pages 5418--5426, Online. Association
  for Computational Linguistics.

\bibitem[{Wang et~al.(2018)Wang, Yu, Guo, Wang, Klinger, Zhang, Chang, Tesauro,
  Zhou, and Jiang}]{wang2018r3}
Shuohang Wang, Mo~Yu, Xiaoxiao Guo, Zhiguo Wang, Tim Klinger, Wei Zhang, Shiyu
  Chang, Gerry Tesauro, Bowen Zhou, and Jing Jiang. 2018.
\newblock \href
  {https://www.aaai.org/ocs/index.php/AAAI/AAAI18/paper/view/16712} {R3:
  Reinforced ranker-reader for open-domain question answering}.
\newblock In \emph{AAAI}, pages 5981--5988.

\bibitem[{Wang et~al.(2019{\natexlab{a}})Wang, Ng, Ma, Nallapati, and
  Xiang}]{wang2019multi}
Zhiguo Wang, Patrick Ng, Xiaofei Ma, Ramesh Nallapati, and Bing Xiang.
  2019{\natexlab{a}}.
\newblock \href {https://doi.org/10.18653/v1/D19-1599} {Multi-passage {BERT}: A
  globally normalized {BERT} model for open-domain question answering}.
\newblock In \emph{Proceedings of the 2019 Conference on Empirical Methods in
  Natural Language Processing and the 9th International Joint Conference on
  Natural Language Processing (EMNLP-IJCNLP)}, pages 5878--5882, Hong Kong,
  China. Association for Computational Linguistics.

\bibitem[{Wang et~al.(2019{\natexlab{b}})Wang, Ng, Ma, Nallapati, and
  Xiang}]{wang-etal-2019-multi}
Zhiguo Wang, Patrick Ng, Xiaofei Ma, Ramesh Nallapati, and Bing Xiang.
  2019{\natexlab{b}}.
\newblock \href {https://doi.org/10.18653/v1/D19-1599} {Multi-passage {BERT}: A
  globally normalized {BERT} model for open-domain question answering}.
\newblock In \emph{Proceedings of the 2019 Conference on Empirical Methods in
  Natural Language Processing and the 9th International Joint Conference on
  Natural Language Processing (EMNLP-IJCNLP)}, pages 5878--5882, Hong Kong,
  China. Association for Computational Linguistics.

\bibitem[{Wolf et~al.(2020)Wolf, Debut, Sanh, Chaumond, Delangue, Moi, Cistac,
  Rault, Louf, Funtowicz, Davison, Shleifer, von Platen, Ma, Jernite, Plu, Xu,
  Le~Scao, Gugger, Drame, Lhoest, and Rush}]{wolf-etal-2020-transformers}
Thomas Wolf, Lysandre Debut, Victor Sanh, Julien Chaumond, Clement Delangue,
  Anthony Moi, Pierric Cistac, Tim Rault, Remi Louf, Morgan Funtowicz, Joe
  Davison, Sam Shleifer, Patrick von Platen, Clara Ma, Yacine Jernite, Julien
  Plu, Canwen Xu, Teven Le~Scao, Sylvain Gugger, Mariama Drame, Quentin Lhoest,
  and Alexander Rush. 2020.
\newblock \href {https://doi.org/10.18653/v1/2020.emnlp-demos.6} {Transformers:
  State-of-the-art natural language processing}.
\newblock In \emph{Proceedings of the 2020 Conference on Empirical Methods in
  Natural Language Processing: System Demonstrations}, pages 38--45, Online.
  Association for Computational Linguistics.

\bibitem[{Xin et~al.(2020)Xin, Nogueira, Yu, and Lin}]{xin2020early}
Ji~Xin, Rodrigo Nogueira, Yaoliang Yu, and Jimmy Lin. 2020.
\newblock \href {https://doi.org/10.18653/v1/2020.sustainlp-1.11} {Early
  exiting {BERT} for efficient document ranking}.
\newblock In \emph{Proceedings of SustaiNLP: Workshop on Simple and Efficient
  Natural Language Processing}, pages 83--88, Online. Association for
  Computational Linguistics.

\bibitem[{Xiong et~al.(2020)Xiong, Xiong, Li, Tang, Liu, Bennett, Ahmed, and
  Overwijk}]{xiong2020approximate}
Lee Xiong, Chenyan Xiong, Ye~Li, Kwok-Fung Tang, Jialin Liu, Paul Bennett,
  Junaid Ahmed, and Arnold Overwijk. 2020.
\newblock Approximate nearest neighbor negative contrastive learning for dense
  text retrieval.
\newblock \emph{arXiv preprint arXiv:2007.00808}.

\bibitem[{Xiong et~al.(2021)Xiong, Wang, and Wang}]{xiong2020progressively}
Wenhan Xiong, Hong Wang, and William~Yang Wang. 2021.
\newblock \href {https://www.aclweb.org/anthology/2021.eacl-main.244}
  {Progressively pretrained dense corpus index for open-domain question
  answering}.
\newblock In \emph{Proceedings of the 16th Conference of the European Chapter
  of the Association for Computational Linguistics: Main Volume}, pages
  2803--2815, Online. Association for Computational Linguistics.

\bibitem[{Yang and Seo(2020)}]{yang2020retriever}
Sohee Yang and Minjoon Seo. 2020.
\newblock Is retriever merely an approximator of reader?
\newblock \emph{arXiv preprint arXiv:2010.10999}.

\bibitem[{Yang et~al.(2019)Yang, Xie, Lin, Li, Tan, Xiong, Li, and
  Lin}]{yang-etal-2019-end}
Wei Yang, Yuqing Xie, Aileen Lin, Xingyu Li, Luchen Tan, Kun Xiong, Ming Li,
  and Jimmy Lin. 2019.
\newblock \href {https://doi.org/10.18653/v1/N19-4013} {End-to-end open-domain
  question answering with {BERT}serini}.
\newblock In \emph{Proceedings of the 2019 Conference of the North {A}merican
  Chapter of the Association for Computational Linguistics (Demonstrations)},
  pages 72--77, Minneapolis, Minnesota. Association for Computational
  Linguistics.

\end{thebibliography}
\bibliographystyle{acl_natbib}

\appendix
\clearpage


\section{Additional Accuracy@K Analysis}
\label{app:accuracy_at_k}
Analysis of Accuracy@K on NQ-Open development data in Figure \ref{fig:accuracy-at-k_nq-open-dev}, on EfficientQA data is shown in Figure \ref{fig:accuracy-at-k_efficientqa}, and on TQ-Open development data in Figure \ref{fig:accuracy-at-k_trivia-dev} and test data in Figure \ref{fig:accuracy-at-k_trivia-test}. 

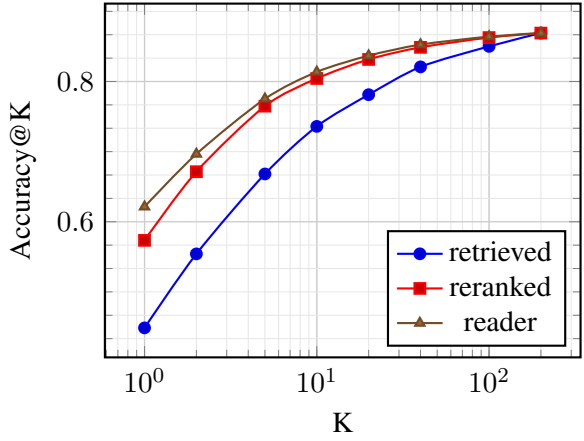
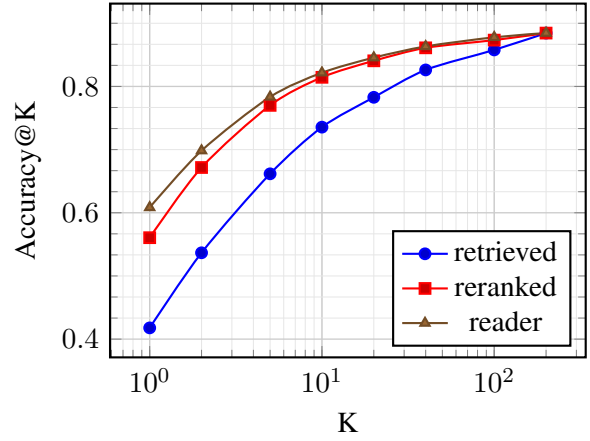
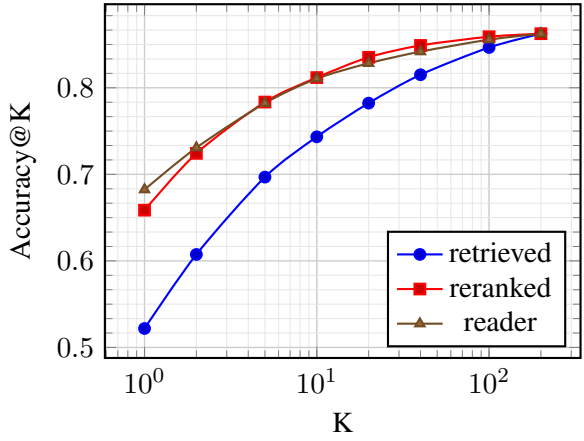
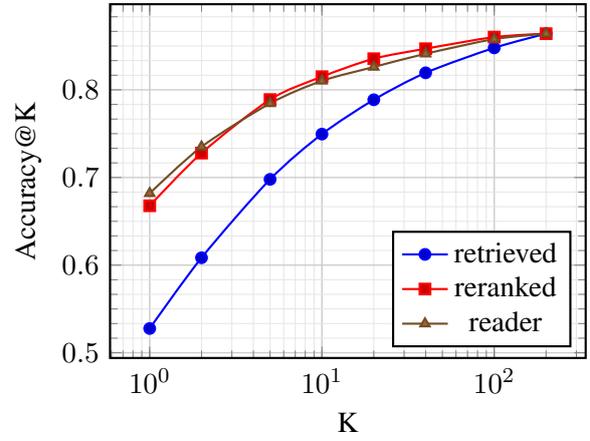
\begin{figure*}[th]
\begin{subfigure}{0.49\textwidth}
    \centering
    \scalebox{1.00}{\begin{tikzpicture}
    \begin{axis}[
            smooth,
            thick,
            xmode=log,
            legend pos=south east,
            xlabel=K,
            ylabel=Accuracy@K,
            skip coords between index={2}{4},
            skip coords between index={5}{9},
            skip coords between index={10}{19},
            skip coords between index={20}{39},
            skip coords between index={40}{99},
            skip coords between index={100}{199},
            minor tick num=5,
            grid=both,
            grid style={line width=0.2pt, draw=gray!20},
            major grid style={line width=0.5pt,draw=gray!40},
            xmajorgrids,
            ymajorgrids,
            width=1\textwidth,
            height=0.8\textwidth,
        ]
        \addplot table [x=k, y=r1-f, col sep=comma] {data/accuracy-at-k_nq-open-dev.csv};
        \addplot table [x=k, y=r2-f, col sep=comma] {data/accuracy-at-k_nq-open-dev.csv};
        \addplot+[mark=triangle*] table [x=topk, y=selection, col sep=comma] {data/accuracy-at-k_reader-full_nq-open-dev.csv};
        
        \legend{
            retrieved, 
            reranked, 
            reader
        };
    \end{axis}%
\end{tikzpicture}
    \caption{NQ-Open (dev).}
    \label{fig:accuracy-at-k_nq-open-dev}
\end{subfigure}
\hfill
\begin{subfigure}{0.49\textwidth}
    \centering
    \scalebox{1.00}{\begin{tikzpicture}
    \begin{axis}[
            smooth,
            thick,
            xmode=log,
            legend pos=south east,
            xlabel=K,
            ylabel=Accuracy@K,
            skip coords between index={2}{4},
            skip coords between index={5}{9},
            skip coords between index={10}{19},
            skip coords between index={20}{39},
            skip coords between index={40}{99},
            skip coords between index={100}{199},
            minor tick num=5,
            grid=both,
            grid style={line width=0.2pt, draw=gray!20},
            major grid style={line width=0.5pt,draw=gray!40},
            xmajorgrids,
            ymajorgrids,
            width=1\textwidth,
            height=0.8\textwidth,
        ]
        \addplot table [x=k, y=r1-f, col sep=comma] {data/accuracy-at-k_efficientqa.csv};
        \addplot table [x=k, y=r2-f, col sep=comma] {data/accuracy-at-k_efficientqa.csv};
        \addplot+[mark=triangle*]  table [x=topk, y=selection, col sep=comma] {data/accuracy-at-k_reader-full_efficientqa.csv};
                
        \legend{
            retrieved, 
            reranked, 
            reader
        };
    \end{axis}%
\end{tikzpicture}
    \caption{EfficientQA.}
    \label{fig:accuracy-at-k_efficientqa}
\end{subfigure}
\par\bigskip
\begin{subfigure}{0.49\textwidth}
    \centering
    \scalebox{1.00}{\begin{tikzpicture}
    \begin{axis}[
            smooth,
            thick,
            xmode=log,
            legend pos=south east,
            xlabel=K,
            ylabel=Accuracy@K,
            skip coords between index={2}{4},
            skip coords between index={5}{9},
            skip coords between index={10}{19},
            skip coords between index={20}{39},
            skip coords between index={40}{99},
            skip coords between index={100}{199},
            minor tick num=5,
            grid=both,
            grid style={line width=0.2pt, draw=gray!20},
            major grid style={line width=0.5pt,draw=gray!40},
            xmajorgrids,
            ymajorgrids,
            width=1\textwidth,
            height=0.8\textwidth,
        ]
        \addplot table [x=k, y=r1-f, col sep=comma] {data/accuracy-at-k_trivia-dev.csv};
        \addplot table [x=k, y=r2-f, col sep=comma] {data/accuracy-at-k_trivia-dev.csv};
        \addplot+[mark=triangle*] table [x=topk, y=selection, col sep=comma] {data/accuracy-at-k_reader-full_trivia-dev.csv};
        
        \legend{
            retrieved, 
            reranked, 
            reader
        };
    \end{axis}%
\end{tikzpicture}
    \caption{TQ-Open (dev).}
    \label{fig:accuracy-at-k_trivia-dev}
\end{subfigure}
\hfill
\begin{subfigure}{0.49\textwidth}
    \centering
    \scalebox{1.00}{\begin{tikzpicture}
    \begin{axis}[
            smooth,
            thick,
            xmode=log,
            legend pos=south east,
            xlabel=K,
            ylabel=Accuracy@K,
            skip coords between index={2}{4},
            skip coords between index={5}{9},
            skip coords between index={10}{19},
            skip coords between index={20}{39},
            skip coords between index={40}{99},
            skip coords between index={100}{199},
            minor tick num=5,
            grid=both,
            grid style={line width=0.2pt, draw=gray!20},
            major grid style={line width=0.5pt,draw=gray!40},
            xmajorgrids,
            ymajorgrids,
            width=1\textwidth,
            height=0.8\textwidth,
        ]
        \addplot table [x=k, y=r1-f, col sep=comma] {data/accuracy-at-k_trivia-test.csv};
        \addplot table [x=k, y=r2-f, col sep=comma] {data/accuracy-at-k_trivia-test.csv};
        \addplot+[mark=triangle*] table [x=topk, y=selection, col sep=comma] {data/accuracy-at-k_reader-full_trivia-test.csv};

        \legend{
            retrieved, 
            reranked, 
            reader
        };
    \end{axis}%
\end{tikzpicture}
    \caption{TQ-Open (test).}
    \label{fig:accuracy-at-k_trivia-test}
\end{subfigure}
\caption{Analysis of Accuracy@K on different datasets.}
\end{figure*}
\section{Additional Component Fusion Analysis}
\label{app:additional_comp_fusion}
This section includes results analogical to Tables \ref{tab:score-aggr}, \ref{tab:binary-decision} on EfficientQA and development data of NQ-Open and TQ-Open (Tables \ref{tab:score-aggr_appendix}, \ref{tab:binary-decision_appendix}).

\begin{table}[H]
    \centering
    \scalebox{1.0}{\begin{tabular}{cccccc}

\cmidrule[\heavyrulewidth]{2-6}
& $\boldsymbol{P}_*$ & \multicolumn{1}{c}{$\emptyset$} & \multicolumn{1}{c}{$\{r\}$} & \multicolumn{1}{c}{$\{rr\}$} & \multicolumn{1}{c}{$\{r,rr\}$} \\ \cmidrule{2-6}
\multirow{3}{*}{\rotatebox[origin=c]{90}{\small NQ-Open}} & $\{e\}$   & 48.38 & 48.94 & 48.85 & 49.14 \\
                                                          & $\{g\}$   & 49.99 & 50.49 & 50.35 & 50.47 \\
                                                          & $\{e,g\}$ & 51.79 & \textbf{51.97} & 51.82 & 51.80 \\
\cmidrule{2-6}
\multirow{3}{*}{\rotatebox[origin=c]{90}{\small TQ-Open}} & $\{e\}$   & 65.07 & 65.21 & 65.16 & 65.24 \\
                                                          & $\{g\}$   & 67.68 & 67.72 & 67.73 & 67.76 \\
                                                          & $\{e,g\}$ & 68.13 & \textbf{68.19} & 68.17 & 68.12 \\
\cmidrule{2-6}
\multirow{3}{*}{\rotatebox[origin=c]{90}{\small EfficientQA}} & $\{e\}$   & 47.56 & 48.33 & 48.89 & 48.72          \\
                                                              & $\{g\}$   & 49.11 & 49.56 & 50.22 & 50.11          \\
                                                              & $\{e,g\}$ & 50.78 & 51.67 & 50.89 & \textbf{52.00} \\
\cmidrule[\heavyrulewidth]{2-6}
\end{tabular}}
    \caption{Score aggregation on validation data of NQ-Open, TQ-Open and EfficientQA.}
    \label{tab:score-aggr_appendix}
\end{table}
\begin{table}[H]
    \centering
    \scalebox{1.0}{\begin{tabular}{cccccc}

\cmidrule[\heavyrulewidth]{2-6}
& $\boldsymbol{P}_*$ & \multicolumn{1}{c}{$\emptyset$} & \multicolumn{1}{c}{$\{r\}$} & \multicolumn{1}{c}{$\{rr\}$} & \multicolumn{1}{c}{$\{r,rr\}$} \\ \cmidrule{2-6}
\multirow{3}{*}{\rotatebox[origin=c]{90}{\small NQ-Open}} & $\{e\}$   & 50.65 & 51.24          & 51.01 & 51.17 \\
                                                          & $\{g\}$   & 50.36 & 50.91          & 50.68 & 50.90 \\
                                                          & $\{e,g\}$ & 52.24 & \textbf{52.29} & 52.27 & 52.07 \\
\cmidrule{2-6}
\multirow{3}{*}{\rotatebox[origin=c]{90}{\small TQ-Open}} & $\{e\}$   & 69.03 & 69.03 & 69.01 & 68.99 \\
                                                          & $\{g\}$   & 69.54 & 69.46 & 69.62 & 69.70 \\
                                                          & $\{e,g\}$ & 69.77 & \textbf{69.79} & 69.67 & 69.61 \\
\cmidrule{2-6}
\multirow{3}{*}{\rotatebox[origin=c]{90}{\small EfficientQA}} & $\{e\}$   & 48.33 & 50.06 & 49.39 & 49.67          \\
                                                          & $\{g\}$   & 48.94 & 49.50 & 50.06 & 49.72          \\
                                                          & $\{e,g\}$ & 50.78 & 51.83 & 50.94 & \textbf{52.22} \\
\cmidrule[\heavyrulewidth]{2-6}
\end{tabular}}
    \caption{Binary decision on NQ-Open, TQ-Open and EfficientQA.}
    \label{tab:binary-decision_appendix}
\end{table}

\section{Data Pre-processing}
\label{app:data_preprocessing}
This section describes how the training datasets for reranker and extractive reader are filtered, and how the distant supervision labeling is generated. Note not each example contains golden passage, as not each example can be mapped to the used dump of Wikipedia. We use the same golden passage mapping as \citet{karpukhin2020dense}.

For passage reranking, the input must contain at least one positive example. We meet this condition either by adding a golden passage or searching for the passage with an answer in the top-400 results retrieved by DPR. In detail about the search, first the Simple tokenizer proposed in DrQA\footnote{\url{https://github.com/facebookresearch/DrQA}} tokenizes each passage and golden answer. The positive example is the best-scored tokenized passage that contains an exact match with one of the tokenized answers. Note the search proceeds in the same way as in DPR's Accuracy@K implementation\footnote{\url{https://github.com/facebookresearch/DPR}}.

The extractive reader is trained only on samples which contain exact match to at least one of the annotated answers in the top-1 passage, or golden passage if it is available. 
The exact match is performed on the subword token level (i.e. in ELECTRA's tokenization).

Next, the span annotations are extracted from the passages at the reader's input. Note each sample may contain multiple answers. The annotations for each answer in each sample are obtained differently in retrieved passages and in the golden passage. 
For retrieved passages, we search for the answer's exact matches in passages, and use each match as target annotation. 
For golden passage, we also search for the answer's exact matches in it. If there is none, the answer is soft matched with single sub-sequence of golden passage, which yields highest non-zero F1 score. The F1 soft match is also performed on the subword token level. Therefore answers with zero highest F1 soft match with golden passage and no exact match in any of the reader's input passages are discarded.

\subsection{Upper Bound on F1 Matching}
Because the brute-force computation of a span with the greatest  nonzero F1 score is potentially very demanding, we found the length limit for spans that are worth searching (see Theorem \ref{the:F1worthSearchingTheorem}).

To compare brute-force with upper bound implementation, we run an experiment on 16,741 passages (retrieved for NQ-Open dev). The average time per passage for brute-force approach was 121 ms while it was only 9 ms for implementation that uses the upper bound optimization.

The soft match is described in Algorithm \ref{alg:softMatch}. It assumes that there is no exact match.

\begin{algorithm}
    \caption{Soft match}
    \label{alg:softMatch}
    
    \begin{algorithmic}[1]
    \Require set of spans S and answer span a
    \Function{SoftMatch}{$\texttt{S}, \texttt{a}$}
        \State $\texttt{actSize} \gets 1$
        \State $\texttt{lenLimit} \gets 2$
        \State $\texttt{bestSpan} \gets \texttt{None}$
        \State $\texttt{bestScore} \gets 0$
        
        \While{$\texttt{actSize} < \texttt{lenLimit}$}
            \ForAll{$t \in S$ of size $\texttt{actSize}$} 
                \State $\texttt{score} \gets \texttt{F1}(t,a)$
                \If{$\texttt{score}>\texttt{bestScore}$}
                    \State $\texttt{bestSpan} \gets t$
                    \State $\texttt{bestScore} \gets \texttt{score}$
                    \State $\texttt{lenLimit} \gets |a|\frac{|t|+|a|-s_{ta}}{s_{ta}}$
                \EndIf
            \EndFor
            \State $\texttt{actSize} \gets \texttt{actSize}+1$
        \EndWhile
    \State \Return $\texttt{bestSpan}$
    \EndFunction
    \end{algorithmic}
\end{algorithm}

\begin{lemma}
\label{lem:upBounIsGreater}
Let $t$ and $a$ be non-empty spans and $0<s_{ta}\leq|a|$ number of shared tokens for them. Then\footnote{|x| symbolises number of tokens in span x.} 
\begin{equation}
   |t| \leq |a|\frac{|t|+|a|-s_{ta}}{s_{ta}} \: .
\end{equation}
\end{lemma}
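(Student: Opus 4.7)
The plan is to reduce the claimed inequality to the product of two quantities of opposite sign, exploiting only the hypothesis $s_{ta} \leq |a|$ and the positivity of $|a|$, $|t|$, $s_{ta}$. Since $s_{ta} > 0$, I would first clear the denominator by multiplying through by $s_{ta}$, obtaining the equivalent inequality
\begin{equation*}
|t|\cdot s_{ta} \;\leq\; |a|\bigl(|t|+|a|-s_{ta}\bigr).
\end{equation*}
Expanding the right hand side and moving the $|a|\cdot|t|$ term to the left, this rearranges to $|t|(s_{ta}-|a|) \leq |a|(|a|-s_{ta})$. At this point the conclusion is immediate from $s_{ta} \leq |a|$: the left hand side is a product of the positive number $|t|$ and the nonpositive number $s_{ta}-|a|$, while the right hand side is a product of two nonnegative numbers, so the left hand side is $\leq 0 \leq$ the right hand side.

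An essentially equivalent route, which I might prefer for readability, is to split the fraction as
\begin{equation*}
\frac{|t|+|a|-s_{ta}}{s_{ta}} \;=\; \frac{|t|}{s_{ta}} + \frac{|a|-s_{ta}}{s_{ta}},
\end{equation*}
multiply by $|a|$, and then note separately that (i) $|a|/s_{ta}\geq 1$ by the hypothesis $s_{ta}\leq |a|$, so the first resulting summand is at least $|t|$, and (ii) the second summand is nonnegative since $|a|\geq s_{ta}$. Adding these two observations yields the desired bound.

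I do not expect any real obstacle here, since the lemma uses only a single ordering hypothesis together with positivity. The two points worth checking are the boundary case $s_{ta}=|a|$, where both sides of the rearranged inequality are zero (so equality holds and the argument still goes through), and the fact that the assumption $s_{ta}>0$ is genuinely needed so that division by $s_{ta}$ is legal. Note also that no hypothesis on $|t|$ beyond non-emptiness is required, which is consistent with the intended use of the bound in Algorithm~\ref{alg:softMatch}, where $|t|$ is what we are trying to constrain.
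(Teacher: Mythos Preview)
Your proof is correct and follows essentially the same algebraic route as the paper's own argument: clear the denominator $s_{ta}$ and then use the hypothesis $s_{ta}\leq|a|$ to control the sign of the remaining terms. The only cosmetic difference is that the paper wraps this in a proof by contradiction, whereas you argue directly (and your fraction-splitting variant is an equally valid restatement of the same idea).
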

\begin{proof}

To prove it by contradiction assume that
\begin{equation}
    |t| > |a|\frac{|t|+|a|-s_{ta}}{s_{ta}} \:,
\end{equation}
then

\begin{equation}
    s_{ta}|t| > |a||t|+|a||a|-|a|s_{ta} \:,
\end{equation}
and also $0 < s_{ta} \leq |a|$, thus $|a||a|-|a|s_{ta} \geq 0$. Therefore even if we assume that\\$|a||a|-|a|s_{ta} = 0$. We get
\begin{equation}
\begin{split}
    s_{ta}|t| > |a||t| \\
    s_{ta} > |a| \: ,
\end{split}
\end{equation}

which is in contradiction with $0 < s_{ta} \leq |a|$.
\end{proof}

\begin{theorem}
\label{the:F1worthSearchingTheorem}
Let $S$ be a set of non-empty spans, $a$ an non-empty answer span, $t$ non-empty trial span, $0 < s_{ta} \leq|a|$ is number of shared tokens for $t$ and $a$, and $S_b = \{z | z \in S \land |z| \geq |a|\frac{|t|+|a|-s_{ta}}{s_{ta}}  \}$. Then the theorem states that 
\begin{equation}
    \forall x \in S_b(\operatorname{F1}(x,a) \leq \operatorname{F1}(t,a)) \: .
\end{equation}

\end{theorem}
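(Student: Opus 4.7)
The plan is to unfold the definition of the F1 score between two token spans and then bound it directly using the assumed length lower bound on $x$. Recall that when $x$ and $a$ are regarded as bags of tokens with $s_{xa}$ shared tokens, precision is $s_{xa}/|x|$ and recall is $s_{xa}/|a|$, so a short harmonic-mean computation gives
\begin{equation}
\operatorname{F1}(x,a) \;=\; \frac{2\, s_{xa}}{|x|+|a|},
\qquad
\operatorname{F1}(t,a) \;=\; \frac{2\, s_{ta}}{|t|+|a|}.
\end{equation}
The rest is just algebra, and the whole argument proceeds in two short steps.

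First, I would use the trivial bound $s_{xa}\le |a|$ (a span cannot share more tokens with $a$ than $a$ itself has), giving
\begin{equation}
\operatorname{F1}(x,a) \;\le\; \frac{2\,|a|}{|x|+|a|}.
\end{equation}
Second, I would plug in the hypothesis $|x| \ge |a|\frac{|t|+|a|-s_{ta}}{s_{ta}}$ into the denominator. Adding $|a|$ on both sides and factoring gives
\begin{equation}
|x|+|a| \;\ge\; |a|\,\frac{|t|+|a|-s_{ta}}{s_{ta}} + |a| \;=\; |a|\,\frac{|t|+|a|}{s_{ta}},
\end{equation}
so that
\begin{equation}
\frac{2\,|a|}{|x|+|a|} \;\le\; \frac{2\,|a|\,s_{ta}}{|a|\,(|t|+|a|)} \;=\; \frac{2\,s_{ta}}{|t|+|a|} \;=\; \operatorname{F1}(t,a),
\end{equation}
which is the desired inequality. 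The hypothesis $s_{ta}>0$ is what lets us divide safely, and the nonemptyness of $a$ is what makes $|a|>0$ so the bound $s_{xa}\le|a|$ yields a finite quantity. Lemma~\ref{lem:upBounIsGreater} is invoked only implicitly, to reassure the reader that the region $|x|\ge|a|\tfrac{|t|+|a|-s_{ta}}{s_{ta}}$ is nonempty and in particular does not accidentally exclude $t$ itself.

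There is really no hard part: the inequality is tight precisely when $s_{xa}=|a|$ (i.e.\ $a$ is fully contained as a multiset in $x$) and $|x|$ sits at the boundary. The only subtlety worth flagging in the write-up is that the bound $s_{xa}\le|a|$ is valid for the bag-of-tokens F1 used here; if F1 were interpreted on sets rather than multisets the same bound still holds, so the proof goes through identically. I would therefore present the two displayed chains above as a single block and conclude.
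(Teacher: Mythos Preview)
Your proof is correct and rests on the same two ingredients as the paper's---the expression $\operatorname{F1}(b,c)=\tfrac{2s_{bc}}{|b|+|c|}$ and the bound $s_{xa}\le|a|$---but you argue directly whereas the paper proceeds by contradiction. The paper additionally invokes Lemma~\ref{lem:upBounIsGreater} to obtain $|t|\le|x|$ and hence $s_{ta}<s_{xa}$ before passing to $s_{xa}\le|a|$; your direct chain shows that detour is unnecessary.
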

\begin{proof}
To prove it by contradiction assume that

\begin{equation}
    \exists x \in S_b (\operatorname{F1}(x,a) > \operatorname{F1}(t,a)) \: .
\end{equation}

F1 score can be expressed as:
\begin{equation}
    \operatorname{F1(b,c)}=\frac{2s_{bc}}{|b|+|c|} \: ,
\end{equation}

thus 
\begin{equation}
\label{eq:ineqF1}
    \frac{2s_{xa}}{|x|+|a|} >  \frac{2s_{ta}}{|t|+|a|} \: .
\end{equation}

From Lemma \ref{lem:upBounIsGreater} $|t|\leq|x|$. Therefore $s_{ta}<s_{xa}$, to satisfy the inequality (in equation \ref{eq:ineqF1}), and we know that $0 < s_{xa} \leq |a|$. So let the $s_{xa}=|a|$ (the maximum) then

\begin{equation}
\begin{split}
    \frac{2|a|}{|x|+|a|} >  \frac{2s_{ta}}{|t|+|a|}
    \\
    |a|(|t|+|a|) > s_{ta}|x| + s_{ta}|a|
    \\
    |x| < |a| \frac{|t|+|a|-s_{ta}}{s_{ta}} \: ,
\end{split}
\end{equation}
which is in contradiction with $x \in S_b$.
\end{proof}

\section{Softmax Notation}
\label{app:softmax_not}
Usually, softmax function $ \sigma: \mathbb{R}^K \rightarrow \mathbb{R}^K $ is  defined as:
\begin{equation}
   \sigma(v)_i = \frac{e^{v_i}}{\sum^K_{j=1} e^{v_j}}.
\end{equation}

However, some parts of this work used variant of softmax that is defined as follows:
\begin{equation}
\begin{split}
    \operatorname*{softmax}\limits_{x \in D} \big({f(x)}\big)_{y} = \frac{e^{f(y)}}{\sum\limits_{x \in D} e^{f(x)}} \: ,
\end{split}
\end{equation}
where $D$ is the input set, $f : D \rightarrow \mathbb{R}$, $y \in D$.

\section{Decoding the Distributions from the Extractive Reader}
\label{app:decoding_ext_probs}
We analyzed the subsets of joint probability space over spans obtained via multiplication of distributions as explained in section \ref{ss:ext_reader} in Table \ref{tab:ext_r_prob_space}. The factors of this space are the distribution given by the outer product of independent probability distributions $\boldsymbol{P}_{start}(.) \boldsymbol{P}_{end}(.)^\top$ denoted as I, joint probability distribution $\boldsymbol{P}_{joint}(.)$ denoted as J, and passage distribution $\boldsymbol{P}_{passage}(.)$ denoted as C.
\begin{table}[H]
\scalebox{0.90}{
\begin{tabular}{cccc}
\toprule
\textbf{Factorization} & \multicolumn{1}{l}{\textbf{NQ-dev}} & \multicolumn{1}{l}{\textbf{NQ-test}} & \multicolumn{1}{l}{\textbf{EfficientQA}} \\ \midrule
I                                       & 48.32                              & 50.58                               & 47.33                                   \\
J                                       & 48.53                              & \textbf{51.25}                      & \textbf{47.83}                          \\
I+J                                     & \textbf{48.57}                     & 50.83                               & \textbf{47.83}                          \\
I+C                                     & 48.22                              & 50.55                               & 47.22                                   \\
J+C                                     & 48.49                              & 51.11                               & 47.56                                   \\
I+J+C                                   & 48.50                                & 50.86                               & 47.67                                   \\\bottomrule
\end{tabular}
}

    \caption{The results of the pipeline with different types of extractive reader's distribution used for decoding. See text for details.}
    \label{tab:ext_r_prob_space}
\end{table}

\begin{table}[H]
    \centering
    \begin{tabular}{cccc}
\toprule
\begin{tabular}[c]{@{}c@{}}\textbf{marginalizes} \\ \textbf{independently}\end{tabular}   & \begin{tabular}[c]{@{}c@{}}\textbf{joint} \\ \textbf{comp.}\end{tabular}     & \begin{tabular}[c]{@{}c@{}}\textbf{start\&end} \\ \textbf{comp.}\end{tabular} & \textbf{EM} \\ \midrule
-             & -              & \checkmark   & 45.42       \\
-             & \checkmark     & \checkmark   & 45.41       \\
\checkmark    & -              & \checkmark   & 45.71       \\
\checkmark    & \checkmark     & -            & \textbf{47.09}       \\
\checkmark    & \checkmark     & \checkmark   & 47.06       \\
\bottomrule
\end{tabular}
    \caption{Ablation of loss components on NQ-Open test dataset using ELECTRA-base model.}
    \label{tab:ext_r_ablation}
\end{table}

\section{Passage Reranker Revision}
In preliminary experiments of this work we used a Longformer encoder \cite{Beltagy2020Longformer} with concatenated passages at it's input to benefit from the early fusion between passages. In particular, the passages at the Longformer's input were shuffled and concatenated, and we used presoftmax score computed from the first Longformer's output representation of each passage as the $rerank$ function. The passages were shuffled with a fixed seed in both, training and test time. Therefore each passage was scored not only according to the question but also according to other passages. However, we did not observe any significant benefits when we used the Longformer setup over a RoBERTa which scores each passage independently (see Table~\ref{tab:old_ablation_study}). 

\begin{table*}
    \centering
    \scalebox{0.78}{\begin{tabular}{cc|rrr|rrr|rrr|rrr}
\toprule
 \multirow{2}{*}{\textbf{Readers}} & \multirow{2}{*}{\textbf{Fusion}} & \multicolumn{3}{c|}{\textbf{NQ-Open (dev)}}                      & \multicolumn{3}{c|}{\textbf{NQ-Open (test)}}                                         & \multicolumn{3}{c|}{\textbf{TQ-Open (test)}}                                         & \multicolumn{3}{c}{\textbf{EfficientQA}}                                           \\
                                   &                                  & Long. & \multicolumn{1}{c}{RoB.} & \multicolumn{1}{c|}{$\Delta$} & \multicolumn{1}{c}{Long.} & \multicolumn{1}{c}{RoB.} & \multicolumn{1}{c|}{$\Delta$} & \multicolumn{1}{c}{Long.} & \multicolumn{1}{c}{RoB.} & \multicolumn{1}{c|}{$\Delta$} & \multicolumn{1}{c}{Long.} & \multicolumn{1}{c}{RoB.} & \multicolumn{1}{c}{$\Delta$} \\ \midrule
 ext                               & -                                & 48.50 & 48.38                    & -0.12                         & 50.86                     & 50.72                    & -0.14                         & 65.41                     & 65.46                    & -0.05                         & 47.67                     & 47.56                    & -0.11                        \\
 gen                               & -                                & 49.34 & 49.40                    & 0.06                          & 51.50                     & 50.69                    & -0.81                         & 68.85                     & 69.14                    & -0.29                         & 47.33                     & 47.33                    & 0.00                         \\
ext+gen                           & naive                            & 49.91 & 49.99                    & 0.08                          & 53.43                     & 52.44                    & -0.99                         & 67.82                     & 68.01                    & -0.19                         & 49.06                     & 49.11                    & 0.05                         \\
ext+gen                           & aggr                             & 52.05 & 51.80                    & -0.25                         & 54.96                     & 54.90                    & -0.06                         & 68.49                     & 68.66                    & -0.17                         & 51.56                     & 52.00                    & 0.44                         \\
ext+gen                           & aggr+bd                          & 52.36 & 52.07                    & -0.29                         & 55.01                     & 54.99                    & -0.02                         & 69.62                     & 69.94                    & -0.32                         & 51.06                     & 52.22                    & 1.16                         \\ \bottomrule
\end{tabular}}%
    \caption{Exact match comparison of Longformer (Long.) and RoBERTa (RoB.) based passage reranker.}
    \label{tab:old_ablation_study}
\end{table*}

\section{Ablating the Extractive Reader's Objective} 
\label{app:ext_r_ablations}

Firstly let us demonstrate that start and end components in the used loss (see equation \ref{eq:extReaderIndLoss}) perform summation over both inter-passage and intra-passage combinations of starts and ends:
\begin{equation} \label{eq:extReaderIndStartEndComponentsComb}
\begin{split}
	-\log \sum_{s \in S} \boldsymbol{P}_{start}(s)
	-\log \sum_{e \in E} \boldsymbol{P}_{end}(e) = \\ =
	-\log \sum_{s \in S} \boldsymbol{P}_{start}(s) \sum_{e \in E} \boldsymbol{P}_{end}(e) = \\ =
	-\log \sum_{s \in S, e \in E} \boldsymbol{P}_{start}(s) \boldsymbol{P}_{end}(e) \, .
\end{split}
\end{equation}

Where $S=starts(C_{rr})$, $E=ends(C_{rr})$ and distribution dependencies are dropped for clarity.
Inter-passage combinations obviously do not correspond to a real answer. Even though that this loss does not reflect the task correctly, it achieves better results (see Table \ref{tab:ext_r_ablation}) than the following loss 

\begin{equation} \label{eq:extReaderJoiLoss}
     -\log \sum_{c \in C_{rr}} \sum_{a_e \in \operatorname{answers}(c)} \boldsymbol{P}_{e}(a_e|q, \mathcal{C}_{rr})
\end{equation}

that marginalizes components jointly, and thus the summation is done only through intra-passage start-end combinations. Such results agree with previous work \cite{cheng-etal-2020-probabilistic}.

Table \ref{tab:ext_r_ablation} also shows that the joint component improves the independent loss variant, but not the other one that marginalizes jointly. We hypothesize that this is because the loss in equation \ref{eq:extReaderJoiLoss} already considers only the intra-passage start-end pairs. Lastly, Table \ref{tab:ext_r_ablation} shows that using just the joint and passage component is sufficient for NQ-Open, which agrees with \citet{fajcik2020rethinking}. 


\section{Results According to Question and Answer Test-Train Overlap}
\label{app:train_test_overlap}
In addition to evaluation on the TQ-Open and NQ-Open shown in Table \ref{tab:systems}, we also report results on subsets of these datasets in Table \ref{tab:results_overlap_subsets}, as split by \citet{lewis-etal-2021-question}. We compare  R2-D1 (retriever, reranker and extractive or generative reader, marked as gen and ext respectively) and R2-D2 (ext+gen) to official results on FiD \cite{izacard2020leveraging}.

\begin{table*}
    \centering
    \begin{tabular}{c|rrrr|rrrr}
\toprule
\multicolumn{1}{c|}{\multirow{1}{*}{\bf Model}}                   & \multicolumn{4}{c|}{\bf NQ-Open}                                                                                                                                                                                                                                                                                   & \multicolumn{4}{c}{\bf TQ-Open}                                                                                                                                                                                                                                                                                                                         \\
\multicolumn{1}{l|}{} & \multicolumn{1}{c}{\small Total} & \multicolumn{1}{c}{\small\begin{tabular}[c]{@{}c@{}}Question\\ Overlap\end{tabular}} & \multicolumn{1}{c}{\small\begin{tabular}[c]{@{}c@{}} Answer \\ Overlap \\ Only\end{tabular}} & \multicolumn{1}{c|}{\small\begin{tabular}[c]{@{}c@{}} No\\ Overlap\end{tabular}} & \multicolumn{1}{c}{\small Total} & \multicolumn{1}{c}{\small\begin{tabular}[c]{@{}c@{}}Question\\ Overlap\end{tabular}} & \multicolumn{1}{c}{\small\begin{tabular}[c]{@{}c@{}}Answer\\ Overlap\\ Only\end{tabular}} & \multicolumn{1}{c}{\small\begin{tabular}[c]{@{}c@{}}No\\ Overlap\end{tabular}} \\ \midrule
FiD                   & 51.40                            & 71.30                                                  & 48.30                                                                                                    & 34.50                                                                                                   & 67.60                            & 87.50                                                                                        & 66.90                                                                                                    & 42.80                                                                                                  \\ \midrule
ext+gen               & 54.99                            & 75.00                                                  & 48.89                                                                                                    & 39.91                                                                                                   & 69.94                            & 90.18                                                                                        & 71.53                                                                                                    & 44.83                                                                                                  \\
gen                   & 50.69                            & 70.06                                                  & 46.98                                                                                                    & 34.04                                                                                                   & 69.14                            & 87.50                                                                                        & 70.32                                                                                                    & 44.83                                                                                                  \\
ext                   & 50.72                            & 72.53                                                  & 45.40                                                                                                    & 35.11                                                                                                   & 65.46                            & 83.63                                                                                        & 66.42                                                                                                    & 39.46                                                                                                  \\ \midrule
$\Delta_{\text{gen}}$ & 4.30 & 4.94 & 1.91 & 5.87 & 0.80 & 2.68 & 1.21 & 0.00 \\
$\Delta_{\text{ext}}$ & 4.27 & 2.47 & 3.49 & 4.80 & 4.48 & 6.55 & 5.11 & 5.37 \\
\bottomrule
\end{tabular}
    \caption{Results on the overlapping and non-overlapping parts of test sets for NQ and TQ. \textit{Total} column corresponds to overall result on the whole dataset, as reported before,  \textit{Question Overlap} corresponds to samples with train-test question overlap and answer overlap, \textit{Answer Overlap Only} corresponds to samples with answer overlap, but no question overlap, and \textit{No Overlap} corresponds to samples with no overlap between train and test sets.}
    \label{tab:results_overlap_subsets}
\end{table*}

\section{Inference Speed of Our Implementation}
\label{app:inference time}
While optimizing the R2-D2’s inference speed was not the main focus of this paper, we show that even our unoptimized implementation can be used in practice in small scale. 
We analyze the speed of our implementation on NQ-Open test data in Table~\ref{tab:inference_time}. 
The times were measured on a workstation with Intel Xeon Silver 4214 48-core CPU, 188GB RAM and Nvidia 2080Ti 12GB GPU. 
Table columns  show settings with and without passage reranker. 
Table rows are split into two parts; \textit{intermediate} rows show time spent by the pipeline’s single component (e.g., row ext. reader shows what time the pipeline spent by running just the ext. reader), and \textit{total} rows show the total time taken by the whole pipeline. 
The retriever and reranker infer with batch sizes 32 and 100 respectively, the readers run with batch size 1.

We note that in retrieval, we do not use any approximate K-NN algorithm to facilitate retrieval of top-K nearest passages and instead do the dot product with the matrix of passages directly on the CPU.
Secondly, we note that we do not parallelize the inference of generative reader and extractive reader.
Thirdly, notice the difference in extractive reader’s speed with and without passage reranker is caused by its different input size (see details of extractive reader’s experiments setup in subsection \ref{ss:models_and_pipeline}). 
Finally, we compare the speed of our approach using FiD with 25 and 100 input passages, like in the original FiD implementation\footnote{We simply pass 100 input passages to the model trained with 25 passages in the experiment.}.
The ratios of our measurements are compared explicitly in Table \ref{tab:inference_ratios}. 

\begin{table*}
    \centering
    \begin{tabular}{@{}ccrr}
\cmidrule[\heavyrulewidth]{2-4}
& \textbf{Modules}                & \multicolumn{2}{c}{\textbf{Rankers}} \\
&                                 & \multicolumn{1}{c}{retriever} & \multicolumn{1}{c}{+reranker} \\
\cmidrule{2-4}
\multirow{7}{*}{\rotatebox[origin=c]{90}{\small intermediate}}
& retriever             & 0.21              & 0.21                      \\
& passage reranker      & -                 & 1.94                      \\
& ext. reader           & 2.21              & 0.35                      \\
& gen. reader (25)      & 0.55              & 0.55                      \\
& answer reranker (25)  &  3.11             & 3.11                      \\
& gen. reader (100)     & 1.85              & -                      \\
& answer reranker (100) & 11.67             & -                      \\
\cmidrule{2-4}
\multirow{5}{*}{\rotatebox[origin=c]{90}{\small total}}
& ext        & 2.41                      & 2.19                      \\
& gen  (25)   & 0.76               & 2.70                      \\
& ext+gen  (25)          & 6.08                & 6.16                      \\
& gen  (100)   & 2.06               & -                      \\
& ext+gen  (100)         & 15.94              & -                      \\
\cmidrule[\heavyrulewidth]{2-4}
\end{tabular}
    \caption{Inference times on NQ-Open in seconds per question. See text for details.}
    \label{tab:inference_time}
\end{table*}

\begin{table*}
    \centering
    \begin{tabular}{l|rrrr}
\toprule
\multicolumn{1}{c|}{\textbf{Setup ratios}}               & \multicolumn{4}{c}{\textbf{Modules}}          \\
                      & only gen. & ans. reranker & gen pipe. & ext+gen pipe. \\\midrule
gen(100) / gen(25)    &     3.36x &         3.75x &     2.71x &         2.62x \\
rr+gen(25) / gen(25)  &     $^{*}$1.00x &         $^{*}$1.00x &     3.55x &         1.01x \\
gen(100) / rr+gen(25) &     $^{*}$3.36x &         $^{*}$3.75x &     0.76x &         2.58x \\\bottomrule
\end{tabular}
    \caption{Ratios of inference times on NQ-Open. First two columns compare the speed in stage of generating abstractive answer (only gen.) and answer reranking (ans. reranker). The subsequent columns compare speed of whole pipeline just with generative reader and no component fusion (gen pipe.) and full R2-D2 pipeline (ext+gen pipe.). Row gen(100)/gen(25) compares the speedup of pipeline when using just 25 passages in FiD's input (denoted as gen(25)) instead of 100 (denoted as gen(100)). Row rr+gen(25)/gen(25) shows speedup gained from not using passage reranker (denoted as rr). Row gen(100)/rr+gen(25) compares the speed of using rr and gen(25) instead of  gen(100) (with no passage reranking). Results marked with $^*$ are not affected by passage reranking component, as they only measure speed of pipeline's individual component. For instance, table shows that doing answer reranking with generative reader with just 25 passages at its input runs 3.75x faster than doing answer reranking with generative reader that uses 100 passages.}
    \label{tab:inference_ratios}
\end{table*}

\end{document}